\definecolor{greend}{HTML}{118E1E}
\newcommand*\numcircledmod[1]{\raisebox{.5pt}{\textcircled{\raisebox{-.9pt} {#1}}}}
\newcommand*{\defeq}{\stackrel{\text{def}}{=}}
\DeclareMathOperator{\sign}{sgn}
\newtheorem{proposition}{Proposition}[section]
\newtheorem{lemma}{Lemma}[section]
\newcommand{\secYJ}{\textsc{SecureFedYJ}}
\newcommand{\expYJ}{\textsc{ExpYJ}}
\newcommand{\fedavg}{\textsc{FedAvg}}
\definecolor{LightRed}{rgb}{0.87, 0.36, 0.51}
\title{SecureFedYJ: a safe feature Gaussianization protocol for Federated Learning}
\author{%
    Tanguy~Marchand \\
    Owkin Inc., New York, USA.\\
  \texttt{tanguy.marchand@owkin.com} 
   \And
    Boris~Muzellec\\
    Owkin Inc., New York, USA.\\
  \texttt{boris.muzellec@owkin.com} 
   \AND
    Constance~Beguier\thanks{Contribution done while at Owkin, Inc.}\\
     \And
    Jean~Ogier du Terrail\\
    Owkin Inc., New York, USA.\\
  \texttt{jean.du-terrail@owkin.com } 
     \And
    Mathieu~Andreux\\
    Owkin Inc., New York, USA.\\
  \texttt{mathieu.andreux.com} 
}
\begin{document}

\maketitle

\begin{abstract}
The Yeo-Johnson (YJ) transformation is a standard parametrized per-feature unidimensional 
transformation often used to Gaussianize features in machine learning. 
In this paper, we investigate the problem of applying the YJ transformation 
in a cross-silo Federated Learning setting under privacy constraints. For the first time, we prove that the 
YJ negative log-likelihood is in fact convex, which allows us to optimize 
it with exponential search. We numerically show that the resulting algorithm is 
more stable than the state-of-the-art approach based on the Brent minimization method. Building on 
this simple algorithm and Secure Multiparty Computation routines, we propose \secYJ, 
a federated algorithm that performs a pooled-equivalent YJ transformation 
without leaking more information than the final fitted parameters do. Quantitative  
experiments on real data demonstrate that, in addition to being secure, our approach
reliably normalizes features across silos as well as if data were pooled, making
it a viable approach for safe federated feature Gaussianization.
\end{abstract}

\section{Introduction}\label{sec:introduction}
Federated Learning (FL)~\cite{shokri2015privacy, mcmahan2017communication} is an approach that was recently proposed 
to train machine learning (ML) models across multiple data holders, or {\it clients}, without centralizing
data points, notably for privacy reasons.
While many FL applications have been proposed, two main settings
have emerged \cite{kairouz2019advances}: cross-device FL, involving a large
number of small edge devices, and cross-silo FL, dealing with a smaller number of clients, with larger
computational capabilities.
Due to the sensitivity and relative local scarcity of medical data,
healthcare is a promising application of cross-silo FL~\cite{rieke2020future},
e.g.\ to train a biomedical ML model between different hospitals
as if all the datasets were pooled in a central server.
In this paper, we focus on the cross-silo setting.

\paragraph{The constraints of cross-silo FL}
Although cross-silo FL resembles standard distributed learning,
it faces at least two important distinct challenges:
privacy and heterogeneity.
Due to data sensitivity, clients might impose stringent security and
privacy constraints on FL collaborations. This arises in
\textit{coopetitive}
FL projects, where models are jointly trained on industrial competitors' datasets~\cite{zheng2019helen},
as well as medical FL applications, where conservative data regulations might apply. In this setting, using standard FL algorithms
such as~\fedavg~\cite{mcmahan2017communication} might not provide enough privacy
guarantees, as privacy attacks such as data reconstruction can be carried out based on the
clients' gradients~\cite{zhu2020deep, zhao2020idlg}. Various protocols based on
Secure Multiparty Computation~(SMC) (see \Cref{sec:background} for more details),
such as Secure Aggregation~\cite{bonawitz2017practical},
can mitigate this shortcoming by disclosing only the sum
of the gradients from all clients to the server, without disclosing each gradient individually.

An additional constraint is that data might present statistical heterogeneity across clients,
i.e.\ the local clients' data distributions may not be identical.
In the case of medical applications, such heterogeneity may be caused e.g.\
by environmental variations or differences in the material that was used for acquisition~\cite{shafiq2017intrinsic, tellez2019quantifying, andreux2020siloed}.
While different ways of adapting federated training algorithms
have been proposed to automatically tackle heterogeneity~\cite{li2020federated, li2020federatedScaffold, karimireddy2020scaffold}, these solutions
do not address data harmonization and normalization prior to FL training.

\paragraph{Preprocessing in ML}
Data preprocessing is a crucial step in many ML applications, leading to important performance gains. 
Among others, common preprocessing methods include data whitening, principal component analysis (PCA)~\cite{jolliffe2005principal} or zero component analysis
\cite{krizhevsky2009learning, goodfellow2013maxout, springenberg2014striving}.
However, linear normalization methods might not suffice when the original data distribution is highly non-Gaussian.
For tabular and time series data,
a popular approach to Gaussianize the marginal distributions is to apply feature-wise non-linear transformations. Two commonly-used
parametric methods are the Box-Cox \cite{box1964analysis} transformation and its
extension, the Yeo-Johnson (YJ) transformation \cite{yeo2000new}.
Both have been used in multiple applications, such as climate and weather
forecast~\cite{zhang2009impact, wang2011multisite, wang2012merging},
economics~\cite{das2017economic} and genomic
studies~\cite{brunner2013immune, zwiener2014transforming, chien2020rank}.

\paragraph{Problem and contributions}
In this paper, we investigate the problem of data normalization
in the cross-silo FL setting, by exploring how to apply the YJ
transformation to a distributed dataset.
This problem arises frequently in medical cross-silo FL, e.g.\ when trying to
jointly train models on genetic data (see e.g. \cite{fro2021nature,zolotareva2021flimma}).
Due to data heterogeneity,
no single client can act as a reference client: indeed, there is no guarantee that transformation parameters fitted on a single client would be relevant for other clients' data. Hence, it is necessary to fit normalization methods on the full federated dataset.
Moreover, in this setting, data privacy is of paramount importance, and
therefore FL protocols should be carefully designed.
Our main contributions to this problem are as follows:
\begin{enumerate}
\item We prove that the negative YJ log-likelihood is convex (\Cref{sec:expYJ}),
which is a novel result, to the best of our knowledge.
\item Building on this property, we introduce \expYJ,  a method to fit the YJ transformation based
on exponential search (\Cref{sec:expYJ}). We numerically show that this method
is more stable than standard approaches for fitting the YJ transformation based
on the Brent minimization method \cite{brent1973algorithms}.
\item We propose \secYJ\ (\Cref{sec:secYJ}),
a secure way to extend \expYJ\ in the cross-silo FL setting using SMC.
We show that \secYJ\ does not leak any information
on the datasets apart from what is leaked by the parameters minimizing the YJ negative
log-likelihood (\Cref{sec:privacyleakage} and \Cref{prop:leakage}).
By construction, \secYJ\ provides the same results as the pooled-equivalent \expYJ, regardless of how the data is split across the clients. We check this property in numerical experiments
(\Cref{subsec:sanitycheck}). The core ideas behind the resulting algorithm, \secYJ, are summarised in \Cref{schemaYJ}.
\end{enumerate}

Finally, we illustrate our contributions in numerical applications on synthetic and genomic data in \Cref{sec:applications}.

\section{Background}\label{sec:background}
\paragraph{The Yeo-Johnson transformation}
The YJ transformation \cite{yeo2000new} was introduced in order to
Gaussianize data that can be either positive or negative. It was proposed as a generalization of the Box-Cox transformation \cite{box1964analysis},
that only applies to non-negative data.
The YJ transformation consists in applying to each feature a monotonic function $\Psi (\lambda, \cdot)$ parametrized by a scalar $\lambda$, independently of the other features. Thus, there are as many $\lambda$'s as there are features. For a real number $x$, $\Psi (\lambda, x)$ is defined as:
 \begin{equation}\label{eq:YJanalytics}
  \Psi(\lambda, x) =  \begin{cases}
        [(x+1)^\lambda -1 ]/ \lambda, & \text{if } x \geq 0, \lambda \neq 0,\\
        \ln(x+1), & \text{if } x \geq 0, \lambda = 0,\\
    - [(-x+1)^{2-\lambda} -1 ]/ (2-\lambda), & \text{if } x < 0, \lambda \neq 2,\\
   - \ln(-x+1), & \text{if } x < 0, \lambda = 2.\\
        \end{cases}
\end{equation}
\Cref{fig:yj_func} shows the shape of the YJ function for various values of $\lambda$.

\paragraph{The Yeo-Johnson likelihood}
Let us consider real-valued samples~$\{x_i\}_{i=1,\cdots, n}$, and let us
apply the YJ transformation~$\Psi(\lambda, \cdot)$ to these samples to Gaussianize their distribution.
The log-likelihood that~$\{\Psi(\lambda, x_i)\}_{i=1,\cdots,n}$ comes from a
Gaussian with mean~$\mu$ and variance~$\sigma^2$ is given by (derivation details are provided in \Cref{app:likelihood}):
\begin{equation}\label{eq:loglikelihoodYJ1}
\log \mathcal{L}_{\mathrm{YJ}}(\lambda, \sigma^2, \mu) =
- \frac{n}{2}\log(2\pi\sigma^2)
- \frac{1}{2\sigma^2}\sum_{i=1}^n\left[\Psi(\lambda, x_i)-\mu\right]^2 
 + (\lambda-1)\sum_{i=1}^n \sign(x_i) \log (|x_i|+1).
\end{equation}
For a given $\lambda$, the log-likelihood is maximized
for~$\mu_* =  \frac{1}{n}\sum_{i=1}^n \Psi(\lambda, x_i)$ 
and~$\sigma^2_* = \frac{1}{n}\sum_{i=1}^n (\Psi(x_i, \lambda) - \mu_*)^2$.
Once we replace~$\mu$ and~$\sigma^2$ by $\mu_*$ and~$\sigma^2_*$, it becomes:
\begin{equation}
\label{eq:loglikelihoodYJ2}
\log \mathcal{L}_{\mathrm{YJ}}(\lambda) =  - \frac{n}{2}\log( \sigma^2_{\Psi(\lambda, \{x_i\})}) + (\lambda-1)\sum_{i=1}^n \sign(x_i) \log (|x_i|+1) - \frac{n}{2}\log (2\pi),
\end{equation}
see ~\cite{yeo2000new}. Maximizing the YJ log-likelihood is therefore a 1-dimensional problem for each feature.
Once the optimal $\lambda_*$ is found,
the transformed data $\Psi (\lambda, x_i)$ is usually renormalized by
subtracting its empirical mean $\mu_*$
and dividing by the square root of its empirical variance
$\sigma^2_*$.
Figure \ref{yj_example} shows an example of the YJ transformation applied to a skew-normal distribution.
\begin{figure}
   \centering
   \begin{subfigure}[b]{0.32\textwidth}
      \centering
      \includegraphics[width=\textwidth]{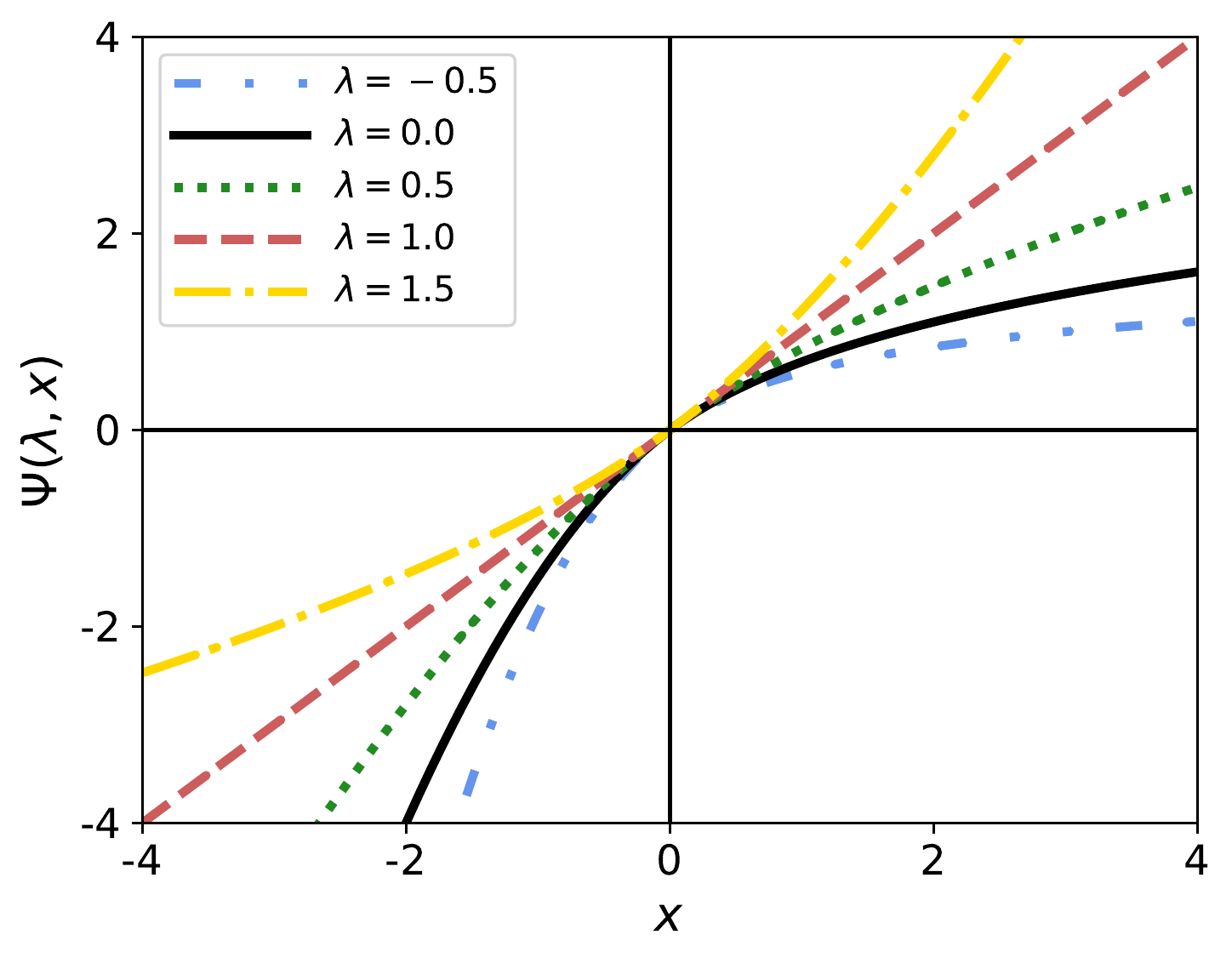}
      \caption{$\Psi(\lambda, \cdot)$ for various $\lambda$}
      \label{fig:yj_func}
   \end{subfigure}
   \begin{subfigure}[b]{0.62\textwidth}
      \includegraphics[width=\textwidth]{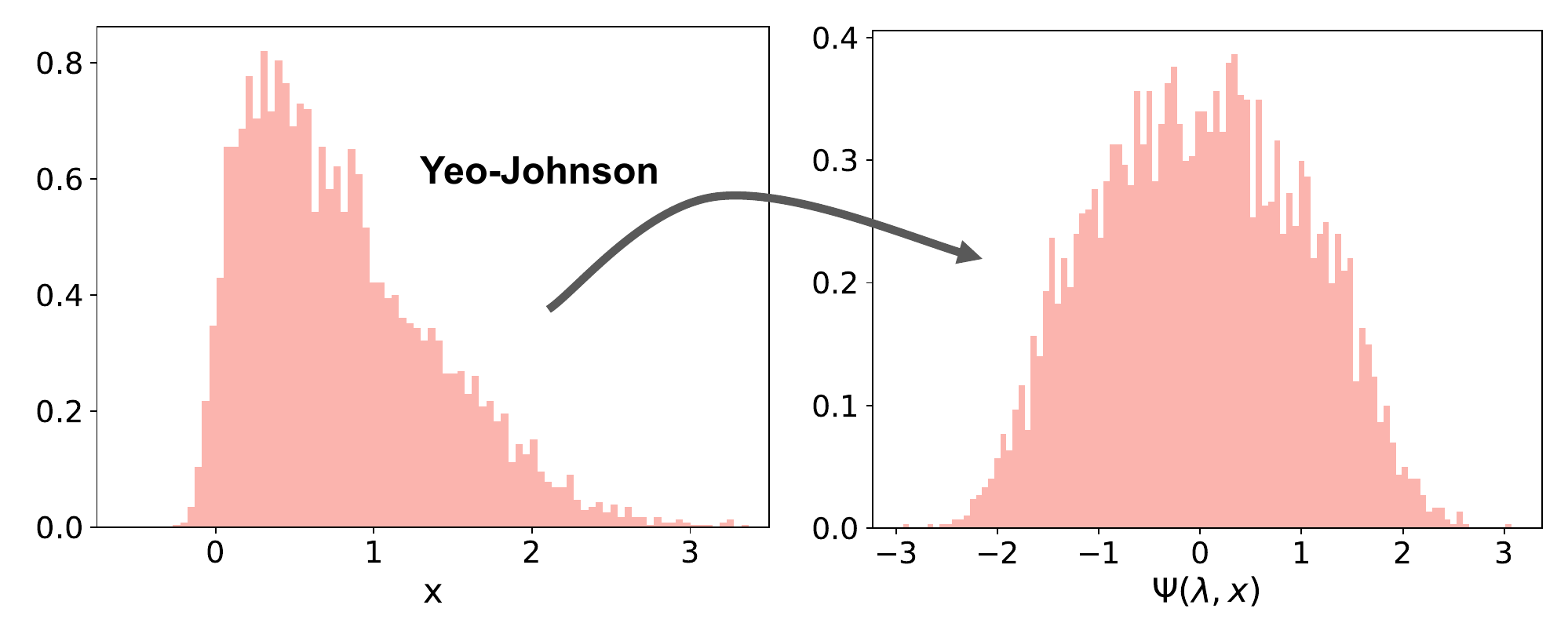}
      \caption{The YJ transformation applied to a skew-normal distribution}
      \label{yj_example}
   \end{subfigure}
   \caption{The Yeo-Johnson transformation applies a 1-D univariate transform to Gaussianize data.}
\end{figure}
Note that in a typical application, the triplet~$(\lambda_*, \mu_*, \sigma^2_*)$ is fitted
on the training data only, and is then used to Gaussianize the test dataset during inference.
\paragraph{Minimization methods in dimension 1}
As seen above, fitting a YJ transformation can be reduced to a 1D optimization problem.
To tackle this problem, we introduce two standard 1D minimization methods:
(i) Brent minimization \cite{brent1973algorithms} and (ii) exponential search \cite{bentley1976almost}.

Brent minimization \cite{brent1973algorithms} (not to be confused with the Brent-Dekker method, see \cite{brent1973algorithms}, chapters 3 and 4) is a widely used method for 1D optimization. It is
based on golden section search and successive parabolic interpolations, and does not require evaluating any derivatives.
This algorithm is guaranteed to converge to a local minimum with superlinear convergence
of order at least~1.3247.
Standard implementations of the YJ transformation, in particular the
{\it scikit-learn} implementation \cite{scikit-learn},
are based on the Brent minimization method to minimize the negative log-likelihood
provided by Eq.~\eqref{eq:loglikelihoodYJ2}.

Exponential search \cite{bentley1976almost} is a dichotomic algorithm designed for unbounded search spaces.
The idea is to first find bounds, and then to perform a classic binary search within these bounds.
This algorithm can be used to find the minimum of convex differentiable functions with linear convergence,
as explained in \Cref{app:expsearch}.
In this work, we build on exponential search to propose a federated version of the YJ transform,
for two main reasons: (i) it is more numerically stable than Brent minimization,
as shown in \Cref{sec:expYJ} and \Cref{fig:brent_exp_comparison},
(ii), it may conveniently be adapted to a federated setting, as shown in \Cref{sec:secYJ}, and
(iii), this latter federated adaptation offers strong privacy garantees, as shown by Proposition~\ref{prop:leakage}.

\paragraph{Secure Multiparty Computation}
As illustrated by various privacy gradient attacks \cite{zhu2020deep, zhao2020idlg},
sensitive information on the clients' datasets can be leaked to the central server
during an FL training. One way to mitigate this risk is to use Secure Multiparty
Computation (SMC) protocols to hide individual contributions to the server.
SMC enables one to evaluate functions with inputs distributed across different users without
revealing intermediate results and is often based on secret sharing.
SMC protocols tailored for ML use-cases have been
recently proposed~\cite{damgaard2013practical, demmler2015aby, mohassel2017secureml, riazi2018chameleon, wagh2018securenn, mohassel2018aby3, wagh2020falcon, ryffel2020ariann}.
These protocols are either designed to enhance
the privacy of FL trainings, or to perform secure inference, i.e.\ to enable the evaluation
of model trained privately on a server without revealing the data nor the model.

A popular FL algorithm relying on SMC is Secure Aggregation (SA)~\cite{bonawitz2017practical}. Schematically, in SA each client adds a random mask to their model update before sending it to the central server. These masks have been tailored in such a way that they all together sum to zero. Therefore, the central server cannot see the individual updates of the clients, but it can recover the sum of these updates by adding all the masked quantities sent from them.

More generally, an SMC routine schematically works as follows (we refer to \Cref{app:SMC} for further details).
Let us consider the setting where $K$ parties $k=1,\dots,K$ want to compute~$g = f(h^{(1)}, \dots, h^{(K)})$
for a known function $f$, where $(h^{(1)}, \dots, h^{(K)})$ denote private inputs.
Each party~$k$ knows $h^{(k)}$ and is not willing to share it.
During the first step, {\it secret sharing}, each party splits its private
input $h^{(k)}$ into K secret shares $h^{(k)}_1, \dots, h^{(k)}_K$, and sends the
shares $h^{(k)}_{k'}$ to the party~$k'$. These secret shares are constructed in
such a way that (i) knowing $h^{(k)}_{k'}$ does not provide any information on the
value of $h^{(k)}$, and (ii) $h^{(k)}$ can be reconstructed from the
vector $(h^{(k)}_1, \dots, h^{(k)}_K)$. For simplicity, we
denote $\llbracket h^{(k)} \rrbracket = (h^{(k)}_1, \dots, h^{(k)}_K)$ the vector of
share secrets. In a second step, {\it the computation}, each party $k'$ computes the quantity denoted
$g_{k'}$ using the secret shares they know along with 
intermediate quantities exchanged with the other parties.
The way to compute $g_{k'}$ depends on $f$ and on the SMC protocol that is used,
and is chosen so that $g = f(h^{(1)}, \dots, h^{(K)})$ can be reconstructed
from $(g_{1}, \dots g_{K})$. Said otherwise, $g_{k'}$ are secret shares
of $g$: $\llbracket g \rrbracket = (g_{1}, \dots g_{K})$.
Finally, during the {\it reveal} step, each party $k$ reveals $g_{k}$ to all
other parties, and each party can reconstruct $g$ from $(g_{1}, \dots g_{K})$.

\paragraph{Threat model}
In this work, we consider an honest-but-curious setting~\cite{paverd2014modelling}.
Neither the clients nor
the server will deviate from the agreed protocol, but each
party can potentially try to infer as much information as possible
using data they see during the protocol.
This setting is relevant for cross-silo FL, where participants
are often large institutions whose reputation could be ternished by
a more malicious behaviour.
\section{A novel method to optimize the Yeo-Johnson log-likelihood: \expYJ\ }
\label{sec:expYJ}

\begin{wrapfigure}{R}{0.565\textwidth}
	\vskip-.75cm
\begin{minipage}{.565\textwidth}
   \begin{algorithm}[H]
      \caption{\expYJ}
      \label{alg:dicYJ}
   \begin{algorithmic}
      \REQUIRE data $x_{i}$, total data size $n$, number of steps $t_\mathrm{max}$
      \STATE Initialize $\lambda_{t=0} \leftarrow 0$, $\lambda^+_{t=0} \leftarrow \infty$, $\lambda^-_{t=0} \leftarrow -\infty$
      \STATE Compute $ S_\varphi $
      \FOR{$t=1$ {\bfseries to} $t_\mathrm{max}$}
      \FOR{$g \in \{\Psi(\lambda, \cdot),\Psi(\lambda, \cdot)^2, \partial_\lambda \Psi(\lambda, \cdot),  \partial_\lambda\Psi(\lambda, \cdot)^2\}$}
      \STATE Compute $S_g$
         \ENDFOR
      \STATE $\Delta_t =  \sign\left[n  S_{\partial \Psi^2} - 2 S_{\Psi}  S_{\partial \Psi}  - 2 S_\varphi \left( S_{\Psi^2} -\frac{S_{\Psi} ^2}{n} \right)\right]$
      \STATE $\lambda_t, \lambda^-_t,\lambda^+_t \leftarrow \mathrm{\textsc{ExpUpdate}}(\lambda_{t-1}, \lambda^-_{t-1},\lambda^+_{t-1}, \Delta_t)$
      \ENDFOR
      \STATE $\lambda_* \leftarrow \lambda_{t_\mathrm{max}}$
      \STATE Compute $\mu_* =  S_{\Psi} / n$ and $\sigma^2_* =  S_{\Psi^2} / n - \mu_*^2$
       \ENSURE The fitted triplet ($\lambda_*, \mu_*, \sigma_*^2)$
   \end{algorithmic}
   \end{algorithm}
\end{minipage}
\vskip-.5cm
\end{wrapfigure}
In this section, we leverage the convexity of the negative log-likelihood
of the YJ transformation (see \Cref{th:convex}) to propose a new method to find the optimal $\lambda_*$ using exponential
search. While this method only offers linear convergence, compared to the super-linear convergence of Brent minimization method, we demonstrate two of its advantages: (i) it is more numerically stable,
and (ii) it is easily amenable to an FL setting with strong privacy guarantees.
The method proposed in this section is based on the following result.
\begin{proposition}
\label{th:convex}
The negative log-likehood $\lambda \mapsto -\log \mathcal{L}_{\mathrm{YJ}}(\lambda)$ \eqref{eq:loglikelihoodYJ2} is strictly convex.
\end{proposition}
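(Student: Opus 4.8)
The plan is to peel off the parts of $-\log\mathcal{L}_{\mathrm{YJ}}$ that are irrelevant to convexity and reduce the claim to a log-convexity statement about the empirical variance. In \eqref{eq:loglikelihoodYJ2} the term $(\lambda-1)\sum_i \sign(x_i)\log(|x_i|+1)$ is affine in $\lambda$ and $-\tfrac{n}{2}\log(2\pi)$ is constant, so $-\log\mathcal{L}_{\mathrm{YJ}}$ is strictly convex if and only if $\lambda\mapsto\log\sigma^2_{\Psi(\lambda,\{x_i\})}$ is, i.e.\ if and only if $\lambda\mapsto\sigma^2_{\Psi(\lambda,\{x_i\})}$ is \emph{strictly log-convex}. (Throughout I assume the $x_i$ are not all equal, for otherwise this variance vanishes identically and the likelihood is not well-defined.)

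The heart of the argument is an integral representation of $\Psi$ valid for \emph{all} $\lambda\in\mathbb{R}$, with no special-casing of $\lambda=0$ or $\lambda=2$: from \eqref{eq:YJanalytics}, with $u_i=\log(1+x_i)\ge 0$ when $x_i\ge 0$ and $v_i=\log(1-x_i)>0$ when $x_i<0$,
$$\Psi(\lambda,x_i)=\int_0^{u_i}e^{\lambda t}\,dt\quad(x_i\ge 0),\qquad \Psi(\lambda,x_i)=-\int_0^{v_i}e^{(2-\lambda)s}\,ds\quad(x_i<0).$$
Next I would use the elementary identity $\sigma^2_{\Psi(\lambda,\{x_i\})}=\frac{1}{2n^2}\sum_{i,j}\big(\Psi(\lambda,x_i)-\Psi(\lambda,x_j)\big)^2$, which writes the variance as a \emph{sum}, so that it suffices to show each map $\lambda\mapsto\big(\Psi(\lambda,x_i)-\Psi(\lambda,x_j)\big)^2$ is log-convex — recalling that finite sums of log-convex functions are log-convex, that a positive multiple of a log-convex function is log-convex, and that $t\mapsto\log t$ applied to a log-convex function is convex. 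For each pair I split on signs: if $x_i,x_j\ge 0$ with $u_i\ge u_j$ the difference equals $\int_{u_j}^{u_i}e^{\lambda t}\,dt$; if $x_i,x_j<0$ with $v_i\ge v_j$ it equals $-\int_{v_j}^{v_i}e^{(2-\lambda)s}\,ds$; and if the signs differ, say $x_i\ge 0>x_j$, it equals $\int_0^{u_i}e^{\lambda t}\,dt+\int_0^{v_j}e^{(2-\lambda)s}\,ds$, a sum of two nonnegative terms. In every case, after taking the absolute value and squaring, we have a positive-measure integral — or a sum of two such — of the log-affine (hence log-convex) maps $\lambda\mapsto e^{\lambda t}$ and $\lambda\mapsto e^{(2-\lambda)s}$; since log-convexity is preserved under integration against a positive measure (Hölder), under sums of nonnegative functions, and under squaring, each summand is log-convex, so $\sigma^2_{\Psi(\lambda,\{x_i\})}$ is log-convex and $\log\sigma^2_{\Psi(\lambda,\{x_i\})}$ is convex.

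To upgrade this to strict convexity, I would exhibit one summand that is \emph{strictly} log-convex. Picking $i,j$ with $x_i\ne x_j$, the relevant interval ($[u_j,u_i]$, $[v_j,v_i]$, or $[0,v_j]$) has positive length, and the moment-generating-function-type integral $\int_a^b e^{\pm\lambda t}\,dt$ over a non-degenerate interval is strictly log-convex (its logarithm has strictly positive second derivative, by Cauchy–Schwarz); in the opposite-sign case one instead checks directly that $\tfrac{d^2}{d\lambda^2}\log(e^{\phi}+e^{\psi})=pq(\phi'-\psi')^2+p\phi''+q\psi''>0$, with $p,q$ the softmax weights of the two terms and $\psi$ strictly convex. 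Writing $\sigma^2_{\Psi(\lambda,\{x_i\})}=\sum_k G_k$ with $G_k=e^{\gamma_k}>0$, $\gamma_k$ convex, and $\gamma_{k_0}$ strictly convex (summands with $x_i=x_j$ vanish identically and are dropped), one then has for $\lambda_\theta=(1-\theta)\lambda_0+\theta\lambda_1$, $\lambda_0\ne\lambda_1$, $\theta\in(0,1)$,
$$\sum_k G_k(\lambda_\theta)\;<\;\sum_k G_k(\lambda_0)^{1-\theta}G_k(\lambda_1)^\theta\;\le\;\Big(\sum_k G_k(\lambda_0)\Big)^{1-\theta}\Big(\sum_k G_k(\lambda_1)\Big)^\theta,$$
the first inequality strict because of the $k_0$ term and the second being Hölder — which is exactly strict log-convexity of the variance, hence strict convexity of $-\log\mathcal{L}_{\mathrm{YJ}}$.

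The step I expect to be the real obstacle is finding the representation above: because $\Psi(\lambda,x_i)$ is negative when $x_i<0$ and $\sum_i\Psi(\lambda,x_i)$ can change sign with $\lambda$, neither the variance nor $\Psi$ itself exhibits any log-convexity as written; only after rewriting the variance as a sum of squared pairwise differences and each $\Psi$ as an integral of log-affine functions does the structure surface, and one must still verify that opposite-sign pairs produce a \emph{sum} rather than a difference of integrals, so that no cancellation can spoil log-convexity.
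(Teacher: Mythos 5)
Your proof is correct, and it takes a genuinely different route from the paper's. The paper splits the data by sign into $\{x_i^+\}$ and $\{x_i^-\}$, decomposes the pooled variance into the two within-group variances plus a squared mean-gap term (\Cref{lemma:addsigma}), handles the within-group terms by citing the Kouider--Chen log-convexity result for the Box--Cox variance via $\Psi(\lambda,x)=\Phi(\lambda,x+1)$ and $\Psi(\lambda,x)=-\Phi(2-\lambda,1-x)$, and reduces the mean-gap term to the convexity of $\lambda\mapsto\ln[(a^\lambda-1)/\lambda]$, proved in \Cref{app:ProofD} by an explicit chain of derivative sign checks; edge cases and strictness are treated separately in \Cref{app:edge,app:strictconvexity}. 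You instead expand the variance as $\tfrac{1}{2n^2}\sum_{i,j}\bigl(\Psi(\lambda,x_i)-\Psi(\lambda,x_j)\bigr)^2$ and represent $\Psi$ as $\int_0^{u}e^{\lambda t}\,dt$ (resp.\ $-\int_0^{v}e^{(2-\lambda)s}\,ds$), so every pairwise term is the square of an integral---or, in the mixed-sign case, of a sum of two integrals with no cancellation---of log-affine functions of $\lambda$, and log-convexity follows from Hölder. Both arguments ultimately rest on the same closure fact, that a finite sum of log-convex functions is log-convex (the paper's \Cref{lemma:lnconvex}), but your base case is cleaner: the integral representation absorbs the $\lambda=0$ and $\lambda=2$ branches of \eqref{eq:YJanalytics}, reveals the quantity $(a^\lambda-1)/\lambda$ as a moment-generating-function-type integral whose strict log-convexity is immediate from Cauchy--Schwarz, removes the dependence on the external Box--Cox result, and unifies the edge-case and strictness analysis into the single hypothesis that the data contain two distinct points (which yields at least one strictly log-convex pairwise summand). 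The softmax computation for the mixed-sign pair and the strict-then-Hölder chain at the end are both verified correctly, so I see no gap.
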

The proof of \Cref{th:convex} builds upon the work of \cite{kouider1995concavity} which shows that
the negative log-likelihood of the Box-Cox transformation \cite{box1964analysis} is convex.
The complete proof is deferred to \Cref{app:Proof}.
\paragraph{The exponential YJ algorithm}
The pseudo-code of the proposed algorithm is presented in \Cref{alg:dicYJ}, and relies on the exponential search presented in \Cref{alg:ExpUpdate} (cf \Cref{app:expsearch} for more details on exponential search). An illustration of \expYJ\ is shown in \Cref{illustrationSecYJ} in \Cref{app:expsearch}.
Due to the strict convexity of the negative log-likelihood of the YJ
transformation, we may perform the exponential search described in \Cref{sec:background} and \Cref{app:expsearch}. To do so, it is enough to obtain the sign of the derivative.
Let $\partial_\lambda \Psi(\lambda, \cdot)^2 = 2 \Psi(\lambda, \cdot) \partial_\lambda \Psi(\lambda, \cdot)$ and $\varphi(x) = \sign(x) + \log(|x|+1).$
Further, for~$g \in \{\Psi(\lambda, \cdot), \partial_\lambda \Psi(\lambda, \cdot),
\Psi(\lambda, \cdot)^2, \partial_\lambda \Psi(\lambda, \cdot)^2, \varphi\}$, let us define $ S_{g} \defeq \sum_{i=1}^{n} g(x_i).$
The derivative of the log-likelihood is available in closed form (see  \Cref{app:derivYJ}):
\begin{equation}
\label{eq:derivative}
\partial_\lambda \log \mathcal{L}_{\mathrm{YJ}} =  \frac{n}{2} \frac{S_{\partial \Psi^2} - 2(S_{\Psi} S_{\partial \Psi})/n}{S_{\Psi^2} - S^2_{\Psi}/n} - S_\varphi.
\end{equation}
Notice that $S_{\Psi^2} - S^2_{\Psi}/n$ can be expressed as a variance, hence is non-negative. We may therefore obtain $\sign\left[\partial_\lambda \log \mathcal{L}_{\mathrm{YJ}}\right]$ while avoiding performing division by computing
\begin{equation}\label{eq:signder2}
\sign\left[\partial_\lambda \log \mathcal{L}_{\mathrm{YJ}}\right]
 =  \sign\left[n S_{\partial \Psi^2} - 2S_{\Psi} S_{\partial \Psi} - 2S_\varphi(S_{\Psi^2} - S^2_{\Psi}/n)\right].
\end{equation}
Avoiding this division is crucial to make the overall procedure more numerical stable, as explained below, and eases the use of SMC routines.
\begin{figure}[h]
\begin{subfigure}{.47\textwidth}
	\centering
	\includegraphics[width=\textwidth]{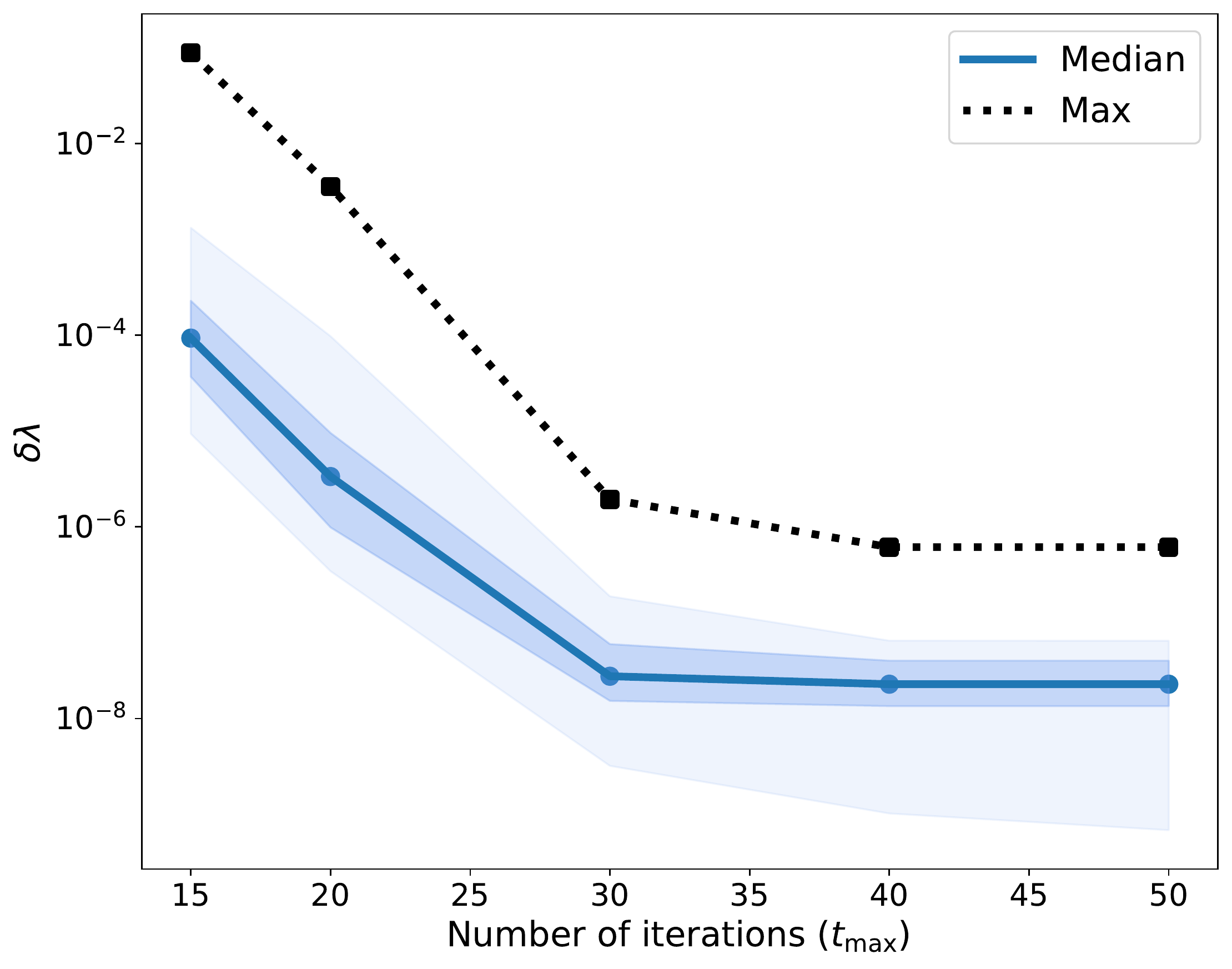}
\end{subfigure}
\begin{subfigure}{.49\textwidth}
     \centering
      \includegraphics[width=\textwidth]{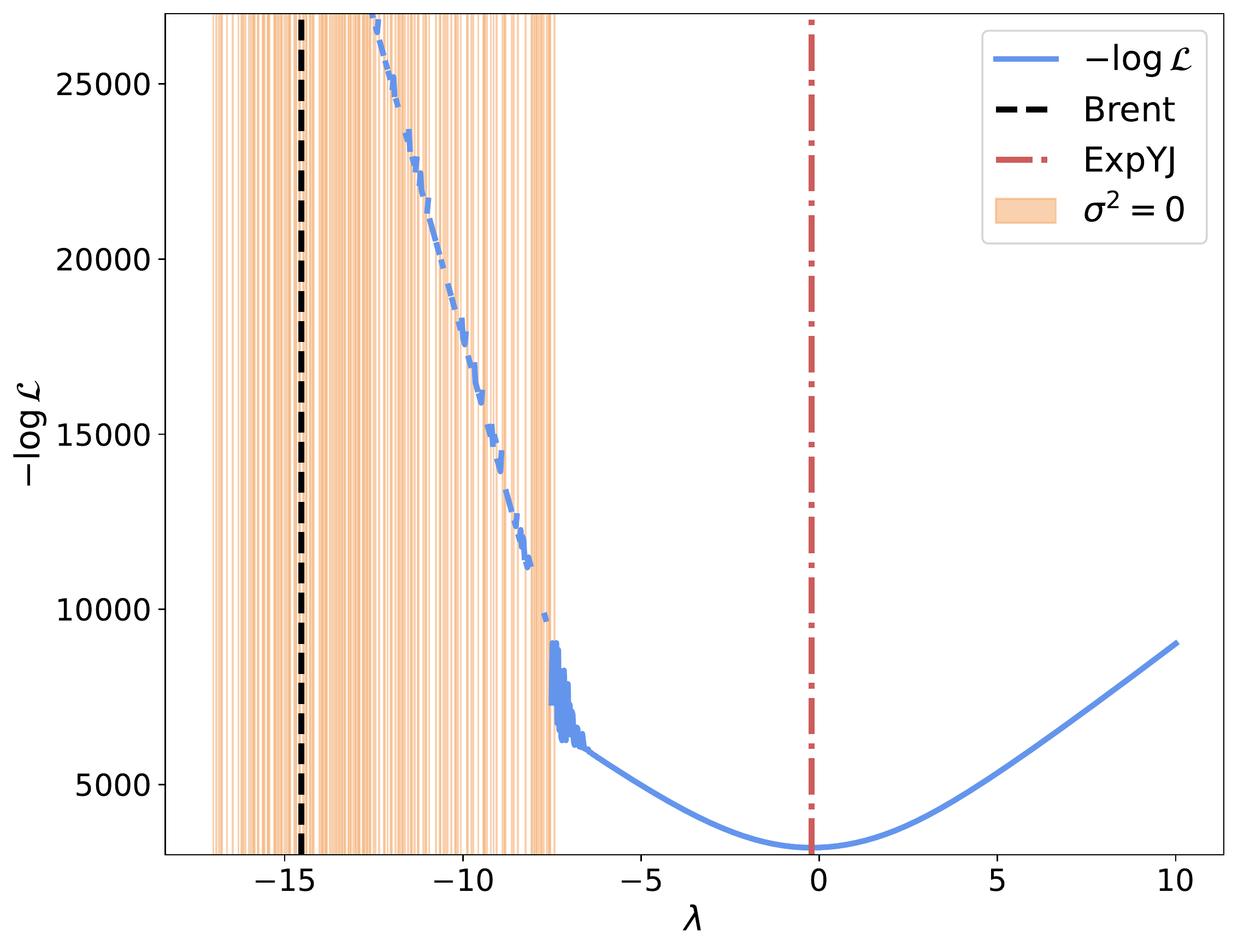}
\end{subfigure}
	\caption{Comparison of \expYJ\ and {\it scikit-learn}. {\bf Left}: For each of the 106 features (see \Cref{sec:datasets}), we
compute the relative difference  $\delta\lambda= \vert\lambda_{\textsc{ExpYJ}} -  \lambda_{\mathrm{sk}}\vert / \vert\lambda_{\mathrm{sk}}\vert$ and plot its median, maximum and $25\%$-$75\%$ and $10\%$-$90\%$ percentiles across the 106 features.
{\bf Right}:  Negative log-likehood of the YJ transformation 
for the mean area of the cell of each
sample of the {\it Breast Cancer} dataset. Full orange bars correspond to values of $\lambda$ for which the likelihood computed using scikit-learn returns $\infty$ as $\sigma^2_\lambda(\{x_i\})$ is equal to $0$ up to float-64 machine precision. Dotted lines correspond to the $\lambda_*$ found using Brent minimization or \expYJ\ with one client.}\label{fig:brent_exp_comparison}
\end{figure}
\begin{wrapfigure}{R}{.35\textwidth}
\vskip-.85cm
\begin{minipage}{0.35\textwidth} 
   \begin{algorithm}[H]
      \caption{\textsc{ExpUpdate}}
      \label{alg:ExpUpdate}
   \begin{algorithmic}
      \REQUIRE $\lambda$, $\lambda^+$, $\lambda^-$, $\Delta \in \{-1, 1\}$
       \IF{$\Delta = 1$}
         \STATE $\lambda^- \leftarrow \lambda$
         \STATE $\lambda \leftarrow (\lambda^+ + \lambda) / 2$ \textbf{if} $\lambda^+ < \infty$ \textbf{else} $\lambda \leftarrow \max(2\lambda, 1)$
         \ELSE
         \STATE $\lambda^+ \leftarrow \lambda$
         \STATE $\lambda \leftarrow (\lambda^- + \lambda) / 2$ \textbf{if} $\lambda^- > -\infty$ \textbf{else} $\lambda \leftarrow \min(2\lambda, -1)$
       \ENDIF 
       \ENSURE Updated $\lambda, \lambda^+, \lambda^-$
   \end{algorithmic}
   \end{algorithm}
\end{minipage}
\vskip-.5cm
\end{wrapfigure}

\paragraph{Accuracy of \expYJ}
We check the accuracy of \expYJ\ on the datasets
presented in \Cref{sec:datasets}. In particular, we compare the results provided by \expYJ\ with
the outputs of the {\it scikit-learn} algorithm based on Brent minimization.

For 2 of the 108 features present in the datasets, the {\it scikit-learn} implementation leads to numerical instabilities discussed hereafter.
Therefore, we focus our comparison on the 106 remaining features, that we aggregated regardless of the dataset. \Cref{fig:brent_exp_comparison}  reports the relative difference $\delta \lambda$ between the results obtained by \expYJ\ and
by the {\it scikit-learn} implementation as a function of the number of
iteration $t_\mathrm{max}$ (as defined in \Cref{alg:dicYJ}). These results show that this relative difference is of order less than ~$10^{-6}$ when $t_\mathrm{max} = 40$.
\paragraph{Numerical stability of \expYJ}
Our experiments demonstrate that \expYJ\ is numerically more stable than Brent minimization.
Indeed, for some values of $\lambda$ and some datasets $\{x_i\}$, the
transformation~$\Psi(\lambda, \cdot)$ concentrates all data points in a small interval such
that the values of $\Psi(\lambda, \{x_i\})$ are all equal up to machine precision.
In that case, the log-likelihood is not well-defined and the term $\log \sigma^2_{\Psi_\lambda}$
takes the value~$-\infty$, which prevents Brent minimization from converging.
This phenomenon does not affect the \expYJ\ routine as we do not compute directly the sign
of $\partial_\lambda \mathcal{L} = \partial_\lambda \sigma^2_{\Psi_\lambda}/\sigma^2_{\Psi_\lambda}- \sum_i \varphi(x_i)$,
but rather the sign of $\sigma^2_{\Psi_\lambda} \partial_\lambda \mathcal{L} = -\partial_\lambda \sigma^2_{\Psi_\lambda} - \sigma^2_{\Psi_\lambda} \sum_i \varphi(x_i)$, see Eq.~\eqref{eq:signder2}.

\Cref{fig:brent_exp_comparison} illustrates this in the case of a feature of the {\it Breast Cancer Dataset}.
The $\lambda_*$ returned by the Brent minimization method
of {\it scikit-learn} is $-14.53$ while the minimizer of the negative log-likelihood found by
the \expYJ\ is~$-0.21$. In particular, \Cref{fig:brent_exp_comparison} shows the values of the negative
log-likelihood as a function of $\lambda$ computed using 64-bit float precision.
The orange vertical full bands correspond to values for which $\sigma_{\Psi_\lambda}^2$ is zero within the
machine precision, resulting to a negative log-likelihood of $\infty$.
This instability happens for 2 of the 108 features used in numerical experiments, where blindly applying the Brent-based YJ transformation leads to all data points collapsing to zero, while \expYJ\ succeeds in transforming the data distributions to more Gaussian-like ones. \Cref{app:Brent} shows that this issue also arises in other real-life datasets.
\section{Applying the Yeo-Johnson transformation in FL}\label{sec:secYJ}

So far, we only considered the centralized setting, where data is accessible from a single server.
Yet, as mentioned in \Cref{sec:introduction,sec:background}, many real-world situations require working
with heterogeneous data split between different centers, and to take privacy constraints into account.
When the data is split across centers $k=1, \dots, K$ and the function to optimize is separable,
i.e.\ of the form $\mathcal{F}(\lambda) = \sum_{k=1}^K f_k(\lambda)$ where each $f_k$ can be computed
from data present in the center $k$ exclusively, Federated Learning techniques were recently proposed.
In short, they consist in repeatedly performing a few rounds of local optimization in each center,
before aggregating local parameters in the server. We refer to~\cite{kairouz2019advances} for an overview
of recent advances in FL.
In our case, however, the YJ negative log-likelihood \eqref{eq:loglikelihoodYJ2} is not separable,
due to the log-variance term. Indeed, turning the variance into a separable term would require
sharing the global YJ mean~$\mu_\lambda$ to all centers at each iteration.
Compared to the method we propose in this section, this would lead to more privacy leakage.

We now introduce \secYJ, a secure federated algorithm that builds upon \expYJ\ to apply YJ transformations.
This algorithm satisfies the two following properties: (i) it is \textit{pooled-equivalent}, i.e.\ it yields the same results as if the data were freely accessible from a single server, and (ii) it leaks as little information as possible about the underlying datasets, as shown by \Cref{prop:leakage}. 
Finally, it converges in a limited number of iterations,
thanks to the linear convergence of the underlying exponential search.

\paragraph{\secYJ}
 \secYJ\ is a federated adaptation of \expYJ\ presented
in \Cref{sec:expYJ}  to find the best
parameters $(\lambda_*, \mu_*, \sigma_*^2)$ of the YJ transformation when training datasets are split across different clients.
It relies on SMC to ensure that only the final
triplet~$(\lambda_*, \mu_*, \sigma_*^2)$ fitted on the training datasets is revealed,
without leaking any other information apart from the overall total
number of training samples $n$. Indeed, at each intermediate step, only the sign
of $\partial_\lambda \log \mathcal{L}_{\mathrm{YJ}}$ is revealed, and
the mean and variance of the transformed data is only revealed at the last step.
The pseudo-code of the resulting algorithm is presented in \Cref{alg:SecureFedYJ},
and relies on the exponential search presented in \Cref{alg:ExpUpdate}.
A functional representation of \secYJ\ is displayed \Cref{schemaYJ}.

In  \Cref{alg:SecureFedYJ},we label the clients by $k=1, \dots, K$ and each client $k$ holds
data $\{x_{k,i} :  i = 1, \dots, n_k\}$. We suppose that the total number of
samples $n = \sum_{k=1}^K n_k$ is public and shared to all clients. For a given function $g$, we denote $S_{k,g}$ the sum
$S_{k,g} \defeq \sum_{i=1}^{n_k} g(x_{k,i}).$
As introduced in \Cref{sec:background}, we use double brackets $\llbracket \cdot \rrbracket$ to indicate an SMC secret shared across the clients (see \Cref{app:SMC} for more details).
\begin{algorithm}[ht]
   \caption{\secYJ}
   \label{alg:SecureFedYJ}
\begin{algorithmic}
   \REQUIRE Data $\lbrace x_{k,i} \rbrace$, total data size $n$, number of steps $t_\mathrm{max}$
   \STATE {\bfseries Notations:} $\llbracket \cdot \rrbracket$ indicates a SMC secret shared across the clients. Any operation such as $\llbracket \cdot \rrbracket = f(\llbracket \cdot \rrbracket, \llbracket \cdot \rrbracket, \cdots)$ where $f$ can be the sum, product, or the sign, designs an SMC routine across the clients as described in \Cref{app:fdAgo}.
   \STATE Initialize $\lambda_{t=0} \leftarrow 0$, $\lambda^+ \leftarrow \infty$, $\lambda^- \leftarrow -\infty$ independently on each client
   \STATE Clients compute in SMC $\llbracket S_\varphi \rrbracket = \sum_k \llbracket S_{k, \varphi}\rrbracket$
   \FOR{$t=1$ {\bfseries to} $t_\mathrm{max}$}
   \FOR{$g \in \{\Psi(\lambda, \cdot),\Psi(\lambda, \cdot)^2, \partial_\lambda \Psi(\lambda, \cdot),  \partial\Psi(\lambda, \cdot)^2\}$}
   \STATE Clients compute in SMC $\llbracket S_g\rrbracket = \sum_k \llbracket S_{k,g}\rrbracket$,
      \ENDFOR
   \STATE Clients compute in SMC $\llbracket\Delta_t\rrbracket =  \sign\left[n \llbracket S_{\partial \Psi^2}\rrbracket\right. - 2\llbracket S_{\Psi}\rrbracket \llbracket S_{\partial \Psi} \rrbracket   \left. - 2\llbracket S_\varphi \rrbracket(\llbracket S_{\Psi^2}\rrbracket - \llbracket S_{\Psi} \rrbracket ^2/n)\right]$
   \STATE Clients reveal $\Delta_t$
   \STATE $\lambda_t, \lambda^-_t,\lambda^+_t \leftarrow \mathrm{\textsc{ExpUpdate}}(\lambda_{t-1}, \lambda^-_{t-1},\lambda^+_{t-1}, \Delta_{t})$ independently on each client
   \ENDFOR
   \STATE $\lambda_* \leftarrow \lambda_{t_\mathrm{max}}$
   \STATE Clients compute in SMC $\llbracket\mu\rrbracket =  \sum_k \llbracket S_{k,\Psi}\rrbracket / n$ and  $\llbracket\sigma^2\rrbracket =  \sum_k \llbracket S_{k,\Psi^2}\rrbracket / n - \llbracket\mu^2\rrbracket$
   \STATE Clients reveal $\mu_* \leftarrow \mu$ and $\sigma^2_* \leftarrow\sigma^2$
    \ENSURE The fitted triplet ($\lambda_*, \mu_*, \sigma_*^2)$
\end{algorithmic}
\end{algorithm}
\paragraph{Privacy leakage}
\label{sec:privacyleakage}
In \Cref{prop:leakage} we show that \Cref{alg:SecureFedYJ} only reveals information already contained in the fitted triplet $(\lambda_*, \mu_*, \sigma_*^2)$.
In comparison, turning the YJ negative log-likelihood \eqref{eq:loglikelihoodYJ2} into a log-separable function before using off-the-shelf FL methods would require sharing $\mu$ and its gradient and centrally computing $\sigma^2$ for intermediate values of $\lambda$ at each iteration. This could potentially lead to uncontrolled privacy leakage.
\begin{figure}
   \begin{subfigure}[b]{.48\textwidth}
      \centering
     \includegraphics[width=\textwidth]{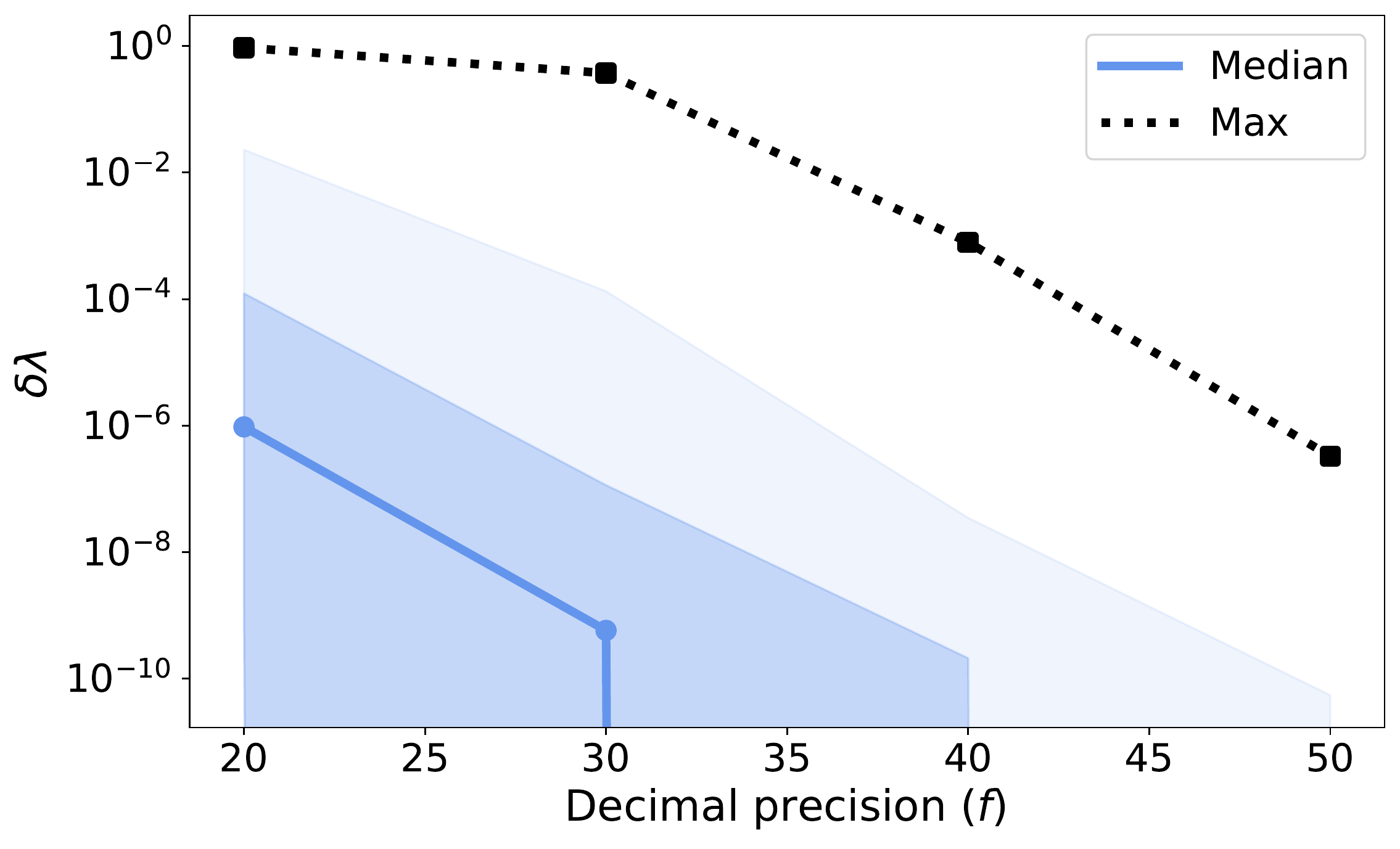}
       \vskip-.1cm
     \caption{Homogeneous}
  \end{subfigure}
  \begin{subfigure}[b]{.48\textwidth}
     \centering
     \includegraphics[width=\textwidth]{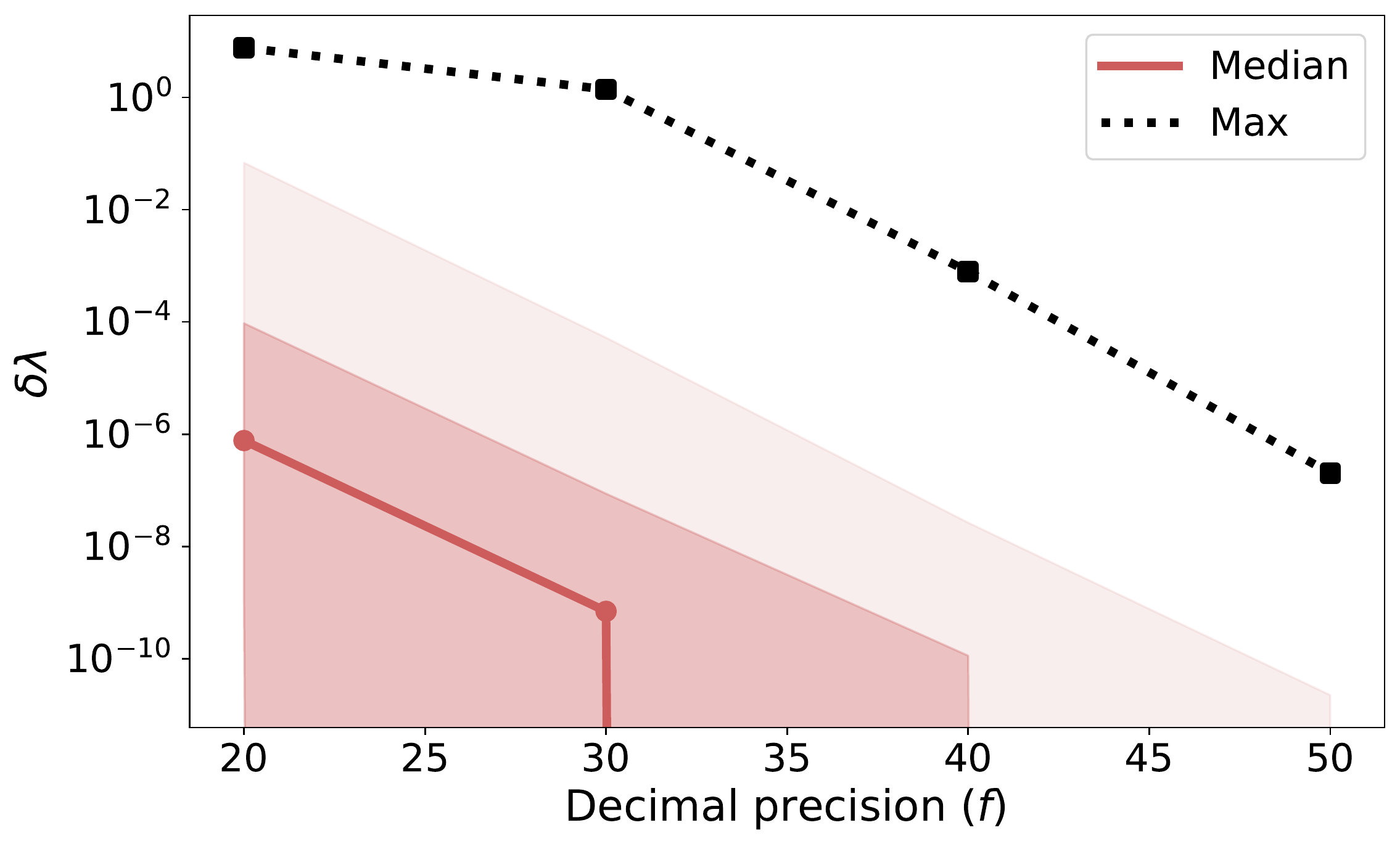}
       \vskip-.1cm
     \caption{Heterogeneous}
  \end{subfigure}
  \caption{Comparison of \secYJ\ and \expYJ\ for various fixed-point floating precisions $f$ used in SMC, with~$l=f+50$ and $t_\mathrm{max} = 40$. The data is distributed across 10 clients, either randomly (homogeneous, {\em left}), or per decile (heterogeneous, {\em right}, i.e.\ each client gets one decile of the data). We report the maximum, median, $25\%$-$75\%$ and $10\%$-$90\%$ percentiles of the relative error~$\delta\lambda= |\lambda_{\textsc{SecFedYJ}} - \lambda_{\textsc{ExpYJ}}| / |\lambda_{\textsc{ExpYJ}}|$
across the~108 features described in \Cref{sec:datasets}.}
\label{fig:hetero_homo_comparison}
\end{figure}
\begin{proposition}
\label{prop:leakage}
The fitted parameter $\lambda_*$ contains all the information revealed during the intermediate steps of \secYJ.
More precisely,  there exists a deterministic function $\mathcal{F}$ such that for any set of datasets
$\{x_{k,i}\}$ , if $\lambda_*(\{x_{k,i}\})$ is the result of \secYJ\ on $\{x_{k,i}\}$, then~$\{\lambda_t, \lambda^+_t, \lambda^-_t, \Delta_t\}_{t=1,\cdots, t_\mathrm{max}} = \mathcal{F}\left[\lambda_*(\{x_{k,i}\})\right]$
\end{proposition}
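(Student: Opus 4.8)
The plan is to exhibit the deterministic function $\mathcal{F}$ explicitly by ``replaying'' the exponential search that \secYJ\ runs, using only the final output $\lambda_*$. The key observation is that \Cref{alg:ExpUpdate} is a deterministic map: given the current triplet $(\lambda_{t-1}, \lambda^-_{t-1}, \lambda^+_{t-1})$ and the bit $\Delta_t$, it outputs $(\lambda_t, \lambda^-_t, \lambda^+_t)$ with no dependence on the data. Since the initial triplet $(\lambda_0, \lambda^-_0, \lambda^+_0) = (0, -\infty, +\infty)$ is also fixed and data-independent, the entire transcript $\{\lambda_t, \lambda^-_t, \lambda^+_t, \Delta_t\}_{t=1,\dots,t_\mathrm{max}}$ is a deterministic function of the sequence of signs $(\Delta_1, \dots, \Delta_{t_\mathrm{max}})$ alone. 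It therefore suffices to show that this sign sequence can be recovered from $\lambda_*$.

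The second step is to recover $(\Delta_1, \dots, \Delta_{t_\mathrm{max}})$ from $\lambda_* = \lambda_{t_\mathrm{max}}$. I would argue that the exponential-search dynamics induce a bijection between admissible sign sequences and reachable final values $\lambda_{t_\mathrm{max}}$. Concretely, one can run \Cref{alg:ExpUpdate} forward from the fixed initial triplet over all $2^{t_\mathrm{max}}$ possible sign sequences; because each step of \textsc{ExpUpdate} strictly shrinks the bracketing interval toward $\lambda$ on the side dictated by $\Delta_t$ (bisection when the relevant bound is finite, doubling when it is still infinite), two distinct sign sequences produce two disjoint intervals $[\lambda^-_{t_\mathrm{max}}, \lambda^+_{t_\mathrm{max}}]$ and hence two distinct $\lambda_{t_\mathrm{max}}$. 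Thus the map (sign sequence) $\mapsto \lambda_{t_\mathrm{max}}$ is injective, and its inverse, composed with the replay map of the first step, is the desired $\mathcal{F}$: given $\lambda_*$, decode the unique sign sequence, then regenerate the full transcript. One should also note that $\sigma^2_*$ and $\mu_*$ are revealed only at the final step and play no role in the intermediate transcript, so they need not be accounted for here (they are, of course, part of what the protocol is allowed to leak).

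A subtlety worth spelling out: strictly speaking, not every sign sequence in $\{-1,1\}^{t_\mathrm{max}}$ can arise for a given dataset — only the one produced by the true signs of $\partial_\lambda \log \mathcal{L}_{\mathrm{YJ}}$ evaluated along the search path — but this only makes the claim easier, since $\mathcal{F}$ merely needs to be well-defined and correct on the set of $\lambda_*$ values that \secYJ\ can actually output. By strict convexity of $-\log \mathcal{L}_{\mathrm{YJ}}$ (\Cref{th:convex}), the sign of the derivative along the search path is itself deterministically pinned down by the data, so there is a single consistent sign sequence, consistent with the injectivity argument above. The main obstacle, if any, is being careful with the boundary behaviour of \textsc{ExpUpdate} when $\lambda^+$ or $\lambda^-$ is still infinite (the doubling branch): one must check that the doubling steps, too, keep the reachable final values separated across distinct prefixes of sign sequences, so that injectivity is not broken during the initial bracket-finding phase. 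This is a short case analysis on whether $\max(2\lambda,1)$ / $\min(2\lambda,-1)$ versus midpoint updates overlap, and it goes through because the first sign flip freezes one finite bound and all subsequent updates stay strictly inside the bracket established so far.
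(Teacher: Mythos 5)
Your proposal is correct, and it shares the paper's two-step skeleton (the whole transcript $\{\lambda_t,\lambda_t^\pm,\Delta_t\}$ is a deterministic function of the sign sequence plus the fixed initialization, so it suffices to recover $(\Delta_1,\dots,\Delta_{t_\mathrm{max}})$ from $\lambda_*$), but the justification of the recovery step is genuinely different. The paper decodes each $\Delta_t$ \emph{forward} using the convexity result of \Cref{th:convex}: since $\partial_\lambda \log\mathcal{L}_{\mathrm{YJ}}>0$ for $\lambda<\lambda_*$ and $<0$ for $\lambda>\lambda_*$, one sets $\Delta_t=+1$ iff $\lambda_{t-1}<\lambda_*$ and replays \textsc{ExpUpdate}; this is exactly \Cref{alg:RecoverInfo} in \Cref{app:AppF}, and it gives an explicit $O(t_\mathrm{max})$ construction of $\mathcal{F}$. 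You instead prove injectivity of the map from sign sequences to final iterates, which is a purely combinatorial property of the \textsc{ExpUpdate} dynamics and does not depend on the $\Delta_t$ being derivative signs at all — a slightly more general observation, at the cost of an a priori enumerative inverse. The two arguments are in fact two faces of the same separation property: your key step (after the first divergence at step $t$, one branch's reachable final values all lie strictly above $\lambda_{t-1}$ and the other's strictly below, in both the bisection and doubling regimes) is precisely what makes the paper's greedy rule $\Delta_t=\sign(\lambda_*-\lambda_{t-1})$ correct, so your injectivity proof, made efficient, collapses to the paper's decoder. Your handling of the doubling branch and the remark that $\mathcal{F}$ need only be well-defined on reachable outputs are both sound; if anything, you are more explicit than the paper about why the decoding is unambiguous.
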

\textbf{Proof.}
This proposition comes from the fact that all gradient signs $\Delta_t$ revealed during the algorithm can be retrospectively
inferred from $\lambda_*$. Indeed,~$\partial_\lambda \log \mathcal{L}_{\mathrm{YJ}} < 0$
for $\lambda > \lambda_*$ and $\partial_\lambda \log \mathcal{L}_{\mathrm{YJ}} > 0$
for~$\lambda < \lambda_*$. Besides, the successive values of $\lambda_t$ explored at each step $t$ can be deterministically inferred from the initial value $\lambda_{t=0}$ and and the final fitted value $\lambda_*$. We construct such a function $\mathcal{F}$ and numerical verify this proposition in \Cref{app:AppF}. 
$\blacksquare$
\paragraph{Performance of \secYJ}
\label{subsec:sanitycheck}
We implement \secYJ\ in Python, using the MPyC library \cite{schoenmakers2018mpyc} based on Shamir Secret Sharing \cite{shamir1979share}. We refer to \Cref{app:SMC} for more details on our implementation. To represent signed real-valued numbers in an SMC protocol, we use a fixed-point representation (see \Cref{app:SMCFPR}) using $l$ bits, among which $f$ bits are used for the decimal parts. This means that we consider
floats ranging from~$-2^{l-f}$ to~$2^{l-f}$ and that we have an absolution precision of $2^{-f}$ in our computations.

In order to ensure the accuracy of \secYJ\ results, we need to make sure that $l$ and $f$ are large enough. \Cref{fig:hetero_homo_comparison} shows the accuracy of \secYJ\ when compared to \expYJ\ for various values of~$f$. According to these numerical experiments, taking~$f = 50$ and~$l=100$ provides reasonably accurate results. Moreover, by construction, the outputs of \secYJ\ do not depend on how the data is split across the clients, up to rounding numerical errors. Therefore this algorithm is resilient to data heterogeneity, as long as the numerical decimal precision $f$ is large enough, as shown in \Cref{fig:hetero_homo_comparison}.

Performing \secYJ\ with~$t_\mathrm{\max} = 40$ takes 726 rounds of communication (see \Cref{app:SMCrounds}). During these communication rounds, each client sends overall about 8~Mb per feature to every other client (see also \Cref{app:SMCrounds}). \secYJ\ can be applied independently and in parallel to each feature. Therefore, the overall number of rounds does not depend on the number of features being considered, and the communication costs grow proportionally to the number of features.
In a realistic cross-silo FL setting as described in \cite{fro2021nature}, the bandwidth of the network is $1$~Gb per second with a delay of~$20\ \mathrm{ms}$ 
between every two clients. In this context, the execution of \secYJ\ with~$t_\mathrm{\max} = 40$ on $p$ features would take about $726\times 20\ \mathrm{ms} \simeq 15\ \mathrm{s}$ due to the communication overhead, in addition to $p\times8\ \mathrm{Mb} / 1\ \mathrm{Gbps} \simeq 8 p\ \mathrm{ms}$ due to the bandwidth. This shows that \secYJ\ is indeed a viable algorithm in a real-world scenario.

As pointed out in \Cref{app:expsearch}, the binary search in the exponential search can be replaced by a $k$-ary search. In such a setting, the sign of the negative log-likelihood of the YJ transformation is computed for $k-1$ different values of $\lambda$ at each round. Such a modification would reduce the number of communication rounds required to obtain a given accuracy, while increasing the size of the data exchanged over the network at each round.
\section{Applications}
\label{sec:applications}

\paragraph{Genomic data: TCGA}

\begin{wrapfigure}{r}{.55\textwidth}
 \vskip-.5cm
   \centering
     \includegraphics[width=.55\textwidth]{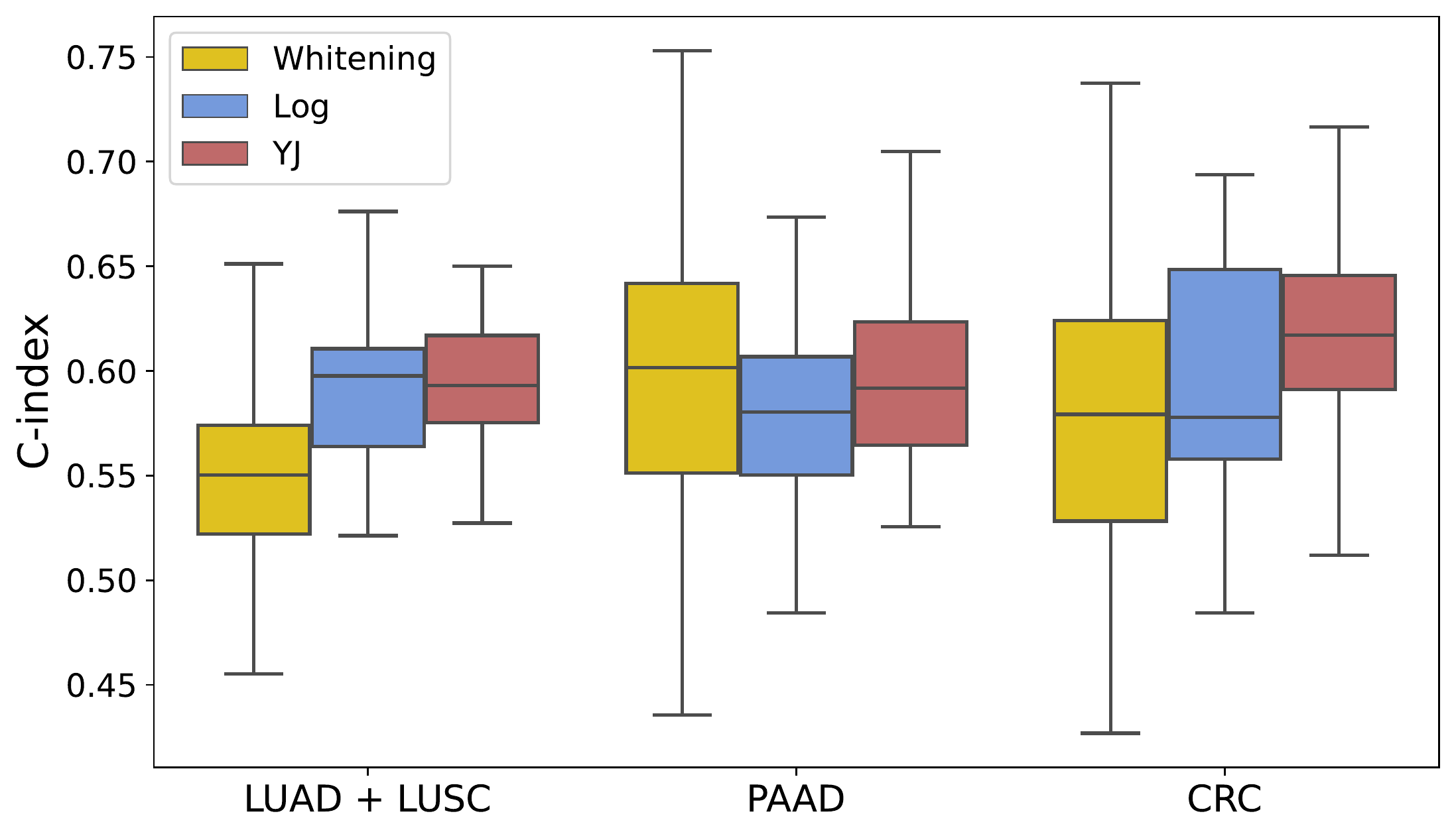}
      \caption{Cross-validation survival analysis performance (higher is better) of
      a CoxPH model with different normalization methods. The YJ transformation yields
      either a better or on par performance, and further stabilizes results compared
      to other approaches.}
     \label{fig:tcga_exp}
     \vskip-.3cm
\end{wrapfigure}

We start by showing the benefits of YJ preprocessing in survival analysis experiments
on lung (LUAD+LUSC), pancreas (PAAD), and colorectal (CRC) cancers.
Given gene expression raw counts (features) and censored survival data (responses)
from patients having either of those three cancers, we aim to fit a
Cox Proportional Hazards (CoxPH) model~\cite{cox1972regression} with the highest possible concordance index
(C-index)~\cite{kleinbaum2010survival},
which measures how well patients are ranked with respect to their survival times.
We refer to~\cite{kleinbaum2010survival} for a more thorough introduction to survival analysis.
In \Cref{fig:tcga_exp}, we compare three different preprocessing methods: (i) whitening, (ii) log normalization, and (iii) YJ, each followed by a PCA dimensionality reduction step.
More precisely, whitening (i) consists in centering and reducing to unit variance the total read counts of all genes across all samples,
log normalization consists in applying $u \mapsto \log(1+u)$ to raw read counts before applying global whitening,
and YJ is a global YJ transform on the total read counts.
We then evaluate each strategy using 5-fold cross-validation with 5 different seeds.
We refer to \Cref{subsec:tcga_hyperparams} for experimental details.
While this experiment is performed in a pooled environment,
note that, importantly, each step has a federated pooled-equivalent version: apart from
the proposed \secYJ~for YJ, see e.g. \cite{grammenos2020fed} for PCA, and
Webdisco~\cite{lu2015webdisco} for Cox model fitting.
This simplified setting allows us to understand the importance 
of the Yeo-Johnson transformation in an ideal setting, independently of other potential
downstream federated learning artifacts.

In \Cref{fig:tcga_exp}, we see that YJ is better or on par with the best method for each cancer:
YJ improves prediction results for colorectal cancer, while
yielding results which are on par with the best results for lung and pancreas cancers,
with a smaller variance.

\paragraph{Synthetic data}
\begin{figure}[ht]
   \centering
   \begin{subfigure}[t]{.47\textwidth}
     \includegraphics[width=\textwidth]{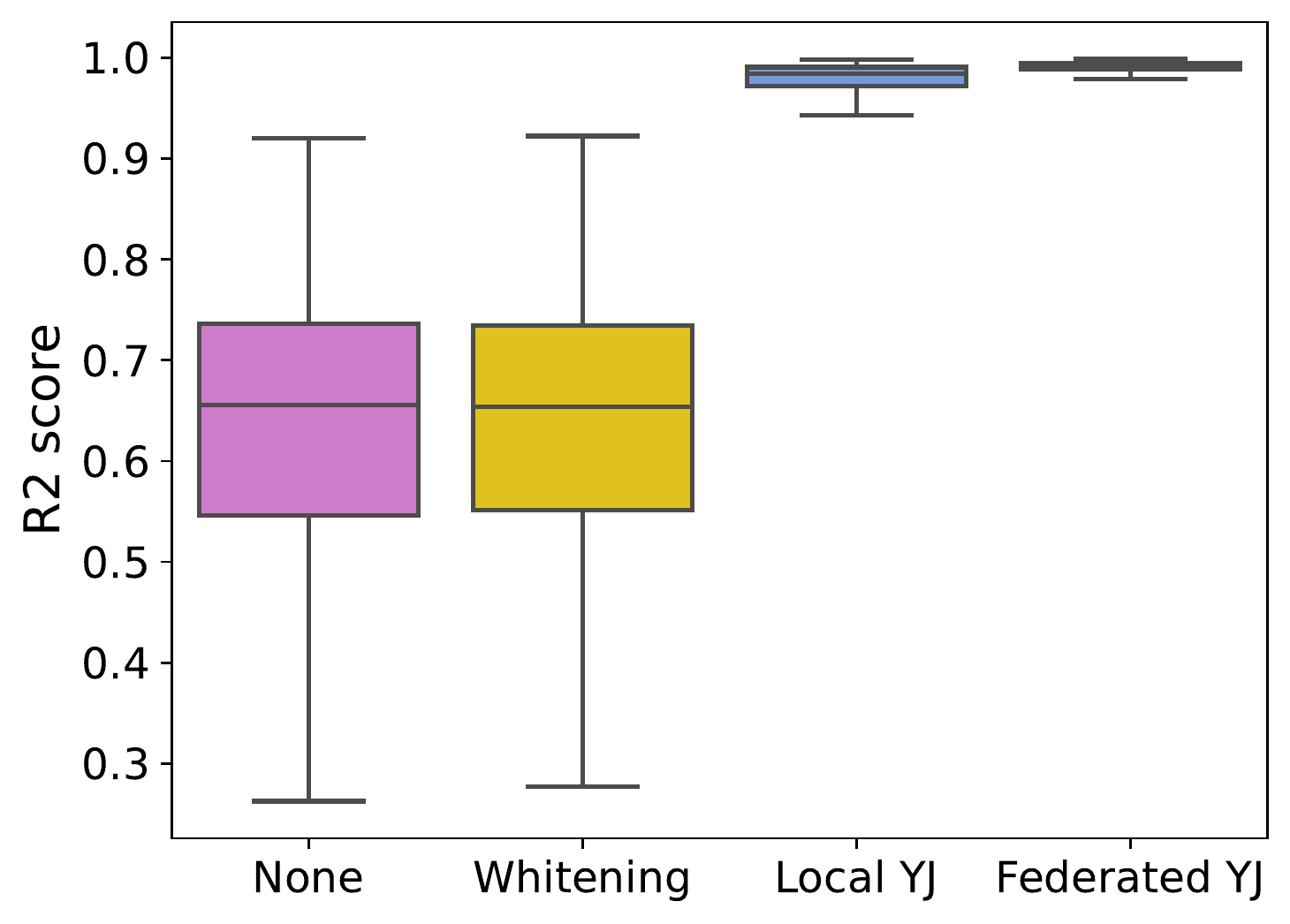}
     \label{subfig:mock_1}
    \end{subfigure}
    \begin{subfigure}[t]{.48\textwidth}
    \raisebox{-.3cm}{
     \includegraphics[width=\textwidth]{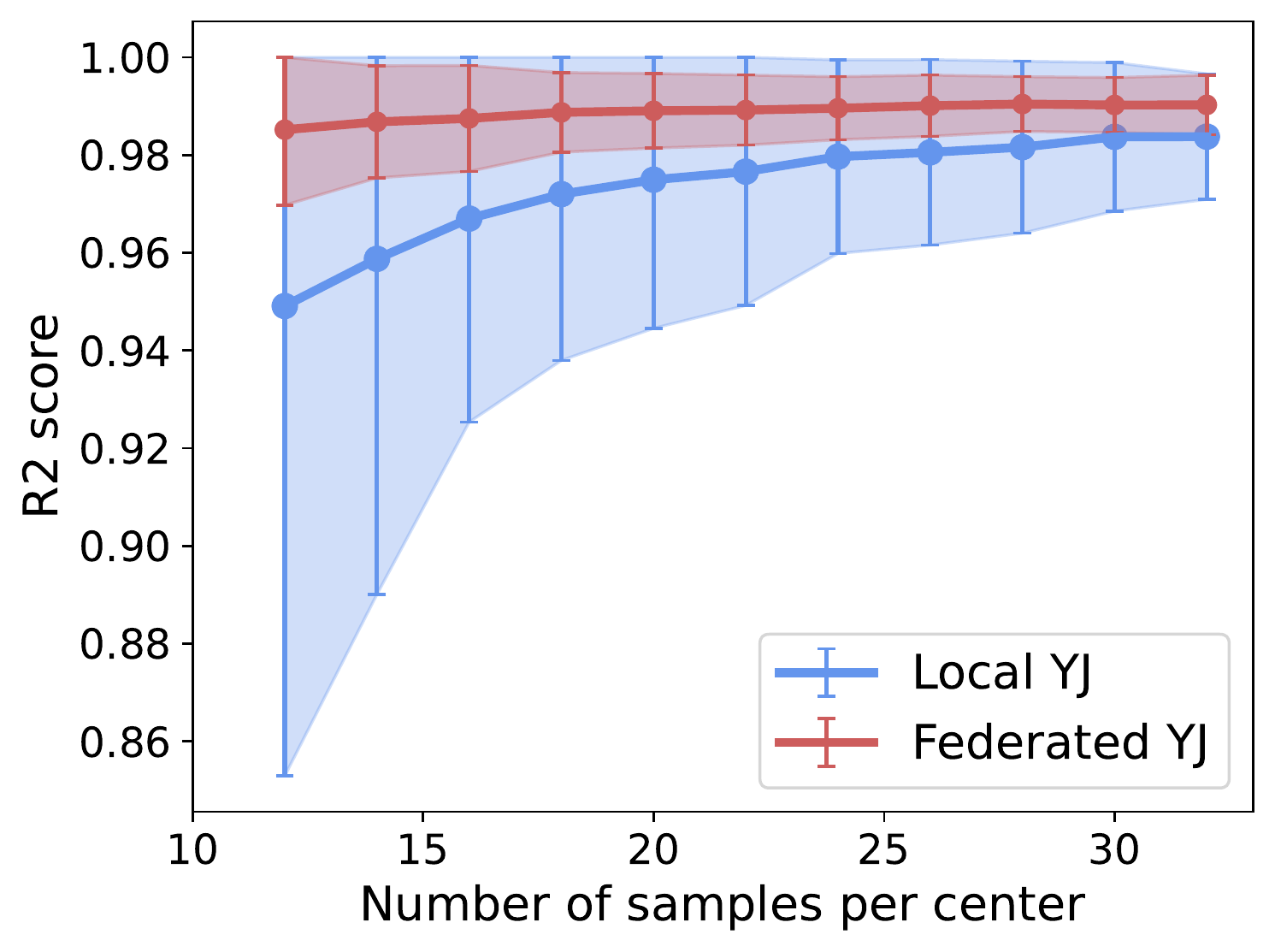}
    }
         \label{subfig:mock_2}
    \end{subfigure}
     \caption{Comparison of different preprocessing methods for linear regression on synthetic federated data. {\bf Left}: performance with 200 samples (20 on each of the 10 centers)  on 1000 independent draws, showing the interest of using YJ preprocessing. The R2 score of the models are computed on another dataset of 200 samples not seen during the training. {\bf Right}: Comparison of local and federated YJ over 1000 independent draws. In local YJ, a single center is randomly chosen to fit $\lambda$, which is then shared with other centers, to ensure that the same transformation is applied everywhere. Full lines correspond to the means, error bars to $\pm$ std of the R2 scores of the model on an unseen test dataset.}
     \label{fig:mock_data}
\end{figure}
We show how applying YJ may help improving performance compared to no or basic preprocessing, and how \secYJ\ yields improvements compared to local YJ transforms in federated linear regression.
To do so, we generate covariates $\widetilde{X}$ and responses $y$ as
\begin{align}\label{eq:covariates}
\begin{split}
	\widetilde{X} & = (\exp(x_1), \exp(x_2 + 2), \sigma(x_3)) \text{ with } X = (x_1, x_2, x_3) \sim \mathcal{N}(0, \mathbf{I}_3), \\
	y& = \beta^T X + \varepsilon \text{ with } \varepsilon \sim \mathcal{N}(0, 0.1),
\end{split}
\end{align} 
where $\sigma(\cdot)$ is the sigmoid function and $\beta = (-1.3, 2.4, 0.87)$ was randomly chosen.
The goal is then to fit a linear model from i.i.d. samples $(\widetilde{X}_i, y_i), i=1, \dots, n$ following \eqref{eq:covariates}, after an optional preprocessing step. Similarly to the previous example, we simulate a cross-device FL setting only for the preprocessing steps, and the linear model is then fitted in a pooled setting for simplicity. All the details of the numerical experiments are provided in \Cref{app:hyperparams}.
We suppose that the samples~$(\widetilde{X}_i, y_i), i=1, \dots, n$ are homogeneously split across $10$ centers. The responses $y_i$ have a highly nonlinear dependency on the covariates $\widetilde{X}_i$, but depend linearly on the $X_i$'s (up to Gaussian noise), which are not observed. 
Hence, we expect that applying a suitable preprocessing step before training a linear model will transform back the $\widetilde{X}_i$'s into the normally-distributed $X_i$'s and lead to a high performance, compared to no transformation.
The results of our experiments are summarized in \Cref{fig:mock_data}. The left figure shows that the YJ transformation is indeed capable of roughly inverting the~$\widetilde{X}_i$'s into the~$X_i$'s, yielding a major improvement compared to no preprocessing or standard centering and reduction to unit variance. Besides, the right figure shows that even in this homogeneous setting where the data is i.i.d.\ across centers, using a federated version of YJ compared to a local version of YJ leads to better average performance, and reduced variance.

\section{Conclusion}
\label{sec:conclusion}
\paragraph{Summary of our contributions}
In this work, we introduce \secYJ, a method to fit a YJ transformation on
data shared by different clients in a cross-silo setting. \secYJ\ is an SMC version of its pooled equivalent \expYJ\, which builds upon the convexity of the negative
log-likelihood of the YJ transformation, a novel result introduced by this work,
and on the fact that the sign of its derivative can be
computed in a stable way. We show that \secYJ\ has
the same accuracy as a standard YJ transformation
on pooled data. In particular, the results do not depend on how the data is split across the clients,
making \secYJ\ resilient to data heterogeneity.
Besides, the quantities disclosed by \secYJ\ during
the training to the central server do not leak any other information than what is
contained in the final parameters $(\mu_*, \lambda_*, \sigma^2_*)$.

\paragraph{Limitations and future work}
While Brent minimization has a
super-linear convergence, our approach only has a linear convergence,
as it relies on exponential search. This can be an issue
if the communication costs between the clients and the server are high. Acceleration could be achieved
by either adapting Brent minimization to a cross-silo setting, or applying a second-order method. We leave the development of a faster SMC methods using either of those two approaches to future work.

Another limitation is that even if our approach reveals only information
that would be contained in the final fitted parameters, such parameters
themselves might leak information about individual samples, as our approach is not differentially private (DP)~\cite{dwork2006differential}.
By adding Gaussian or Laplacian noise to each sample's features when computing the~$S_g$ terms
one could, in principle, make the resulting algorithm DP~\cite{abadi2016deep}. However it is unclear 
to what extent the noise would impact the final accuracy of the method.

Finally, we only consider an {\it honest-but-curious} setting.
We do not explore the threat of a malicious participant
that would purposely deviate from the protocol to either gain more information or to jeopardize the convergence.
We leave this investigation to future work.

\section*{Acknowledgement}
The authors would like to thank the four anonymous reviewers, as well as the anonymous area chair reviewer for their relevant comments and ideas which significantly improved the paper.

\bibliography{bibli}
\bibliographystyle{plainnat}

\section*{Checklist}

\begin{enumerate}

\item For all authors...
\begin{enumerate}
  \item Do the main claims made in the abstract and introduction accurately reflect the paper's contributions and scope?
    \answerYes{}
  \item Did you describe the limitations of your work?
    \answerYes{We described them in \Cref{sec:conclusion}}
  \item Did you discuss any potential negative societal impacts of your work?
    \answerNA{}
  \item Have you read the ethics review guidelines and ensured that your paper conforms to them?
    \answerYes
\end{enumerate}

\item If you are including theoretical results...
\begin{enumerate}
  \item Did you state the full set of assumptions of all theoretical results?
    \answerYes
        \item Did you include complete proofs of all theoretical results?
    \answerYes{The full proof of the main theoretical results, i.e. the convexity of the Yeo-Johnson negative log-likelihood (\Cref{th:convex}) is provided in \Cref{app:Proof}.}
\end{enumerate}

\item If you ran experiments...
\begin{enumerate}
  \item Did you include the code, data, and instructions needed to reproduce the main experimental results (either in the supplemental material or as a URL)?
    \answerNo{The code of the experiments is not provided, but a detailed pseudo-code of the newly proposed algorithms are provided.}
  \item Did you specify all the training details (e.g., data splits, hyperparameters, how they were chosen)?
    \answerYes{We specified all the hyperparameters and the details of the numerical experiment in \Cref{app:NR}.}
        \item Did you report error bars (e.g., with respect to the random seed after running experiments multiple times)?
    \answerYes{Standard deviations or quantiles of the results with respect to the seed are provided (cf plots) }
        \item Did you include the total amount of compute and the type of resources used (e.g., type of GPUs, internal cluster, or cloud provider)?
    \answerNo{The experiment are not heavy and run easily on a personal computer, on a CPU}
\end{enumerate}

\item If you are using existing assets (e.g., code, data, models) or curating/releasing new assets...
\begin{enumerate}
  \item If your work uses existing assets, did you cite the creators?
    \answerYes{The datasets used are open datasets available online and are systematically cited.}
  \item Did you mention the license of the assets?
    \answerYes{The licence of the datasets used are provided in \Cref{sec:datasets}}
  \item Did you include any new assets either in the supplemental material or as a URL?
    \answerNo
  \item Did you discuss whether and how consent was obtained from people whose data you're using/curating?
    \answerNA{We are using open datasets available online. The genomic dataset from TCGA have been previously anonymised by its creator before publication}
  \item Did you discuss whether the data you are using/curating contains personally identifiable information or offensive content?
    \answerNA{}
\end{enumerate}

\item If you used crowdsourcing or conducted research with human subjects...
\begin{enumerate}
  \item Did you include the full text of instructions given to participants and screenshots, if applicable?
    \answerNA
  \item Did you describe any potential participant risks, with links to Institutional Review Board (IRB) approvals, if applicable?
    \answerNA
  \item Did you include the estimated hourly wage paid to participants and the total amount spent on participant compensation?
    \answerNA
\end{enumerate}

\end{enumerate}


\newpage

\appendix

\section{Additional properties of the Yeo-Johnson transformation}
\subsection{Derivation of the Yeo-Johnson log-likelihood}
\label{app:likelihood}
Using the change of variables rule, the probability to draw a set of points $\{x_i\}$ such that $\{\Psi(\lambda, x_i)\}$ follows a Gaussian distribution of mean $\mu$ and variance $\sigma^2$ is given by:
\begin{equation}
\mathbb{P}(\{x_i\} | \lambda, \mu, \sigma) = \mathbb{P}(\{\Psi_\lambda(\lambda, x_i)\} | \lambda, \mu, \sigma) * \det J [\{x_i\}, \Psi(\lambda, x_i)]
\label{eq:applikelihood1}
\end{equation}
where $\det J [x_i, \Psi(\lambda, x_i)]$ is the determinant of the Jacobian matrix $J[\{x_i\}, \{\Psi(\lambda, x_i)\}]$ defined as:

\begin{equation}
J\left[\{x_i\}, \{\Psi(\lambda, x_i)\}\right]_{ab} = \frac{\partial \Psi(\lambda, x_a)}{\partial x_b}
\end{equation}

This matrix is diagonal and each term of its diagonal can be computed using Eq.~\eqref{eq:YJanalytics}. For each value of $\lambda$ and $x_i$, these diagonal terms can be re-written as $\exp[(\lambda-1)\sign(x_i) \log (|x_i|+1)]$. The term $\mathbb{P}(\{\Psi_\lambda(\lambda, x_i)\} | \lambda, \mu, \sigma)$ is equal to:

\begin{equation}
\mathbb{P}(\{\Psi_\lambda(\lambda, x_i)\} | \lambda, \mu, \sigma) = \prod_{i} \frac{1}{\sigma \sqrt{2\pi}}\exp\left[- \frac{(x_i - \mu)^2}{2 \sigma^2}\right]
\end{equation}

By taking the logarithm of  Eq.~\eqref{eq:applikelihood1}, we obtain the log-likelihood provided in \Cref{sec:introduction} and originally derived on \cite{yeo2000new}.
\subsection{Relationship with the Box-Cox transformation}
\label{app:boxcox}
The Box-Cox transformation \cite{box1964analysis} works similarly to the YJ transformation, but only applies to strictly positive data.
The Box-Cox transformation is based on a function $\Phi(\lambda, \cdot)$ parametrized by $\lambda$ and defined for $x > 0$ as:
 \begin{align}
  \Phi(\lambda, x) = \begin{cases}
        \frac{x^\lambda-1}{\lambda}, & \text{if } \lambda \neq 0,\\
       \ln(x), & \text{if } \lambda = 0.\\
       \end{cases}
\end{align}
It is straightforward to check that for any $\lambda \in \mathbb{R}$ the YJ transformation $\Psi$ and the Box-Cox transformation $\Phi$ are related by the following equations:
\begin{equation}
\Psi(\lambda, x) = \begin{cases}
\Phi(\lambda,  x+1) & \text{if } x \geq 0,\\ 
- \Phi(2-\lambda, 1 - x) & \text{if } x  < 0.
\end{cases}
\end{equation}

\subsection{Analytical formulae for the derivatives of the Yeo-Johnson transformation}
\label{app:derivYJ}
The YJ function is infinitely differentiable with respect to
both of its variables ($x$ and $\lambda$). Here are its successive derivatives with respect to $\lambda$:
 \begin{equation}
        \label{eq:DerYJAnalytics}  
   \partial_\lambda^k  \Psi(\lambda, x) = \begin{cases}
        [(x+1)^\lambda [\ln(x+1)]^k -k \partial_\lambda^{k-1}\Psi(\lambda, x)]/ \lambda, & \text{if } x \geq 0, \lambda \neq 0,\\
        \ln(x+1)^{k+1} / (k+1), & \text{if} x \geq 0, \lambda = 0,\\
	([-x+1]^{2-\lambda} [\ln(-x+1)]^k + k \partial_\lambda^{k-1}\Psi(\lambda, x) )/ (2-\lambda), & \text{if } x < 0, \lambda \neq 2,\\
    (- \ln(-x+1))^{k+1} / (k+1), & \text{if} x < 0, \lambda = 2.\\
        \end{cases}
\end{equation}

\section{Background on exponential search}
\label{app:expsearch}
Exponential search~\cite{bentley1976almost} is a method to look for an element
in an unbounded sorted array. The idea is to first find bounds on the array such
that the element is contained within such bounds, and then perform a classic
binary search inside these bounds. Let us consider the task of finding the smallest
element~$u_{i_0}$ greater than a threshold $C$ in an unbounded sorted
array~$\{u_i\}_{i\in\mathbb{N}^*}$. The exponential search iteratively looks at $u_i$
for $i \in \{1, 2, 2^2, 2^4, \dots \}$ until it finds a $i_\mathrm{max}$ such
that~$u_{i_\mathrm{max}} \geq C$. This takes $\log_2 (i_\mathrm{max})$ steps.
Then it performs a binary search between $i=i_\mathrm{max}/2$ and $i=i_\mathrm{max}$, which also
takes $\log_2 (i_\mathrm{max} /2)$ steps.

If $f(s)$ is a strictly increasing function of $s$ taking both positive and negative values,
one can adapt the exponential search to find the root $s_0$ of $f$. The first
step is to find an upper and a lower bound of $s_0$ by evaluating $f$ at
different points using an exponential grid (e.g.\ evaluating~$f$
in~$s = 1, -1,  2, -2, 2^2, -2^2, 2^4, -2^4, \dots$). Once such bounds are found, one
can perform a dichotomic search inside these bounds to find the root $s_0$
of $f$. This dichotomic search has a linear convergence of order $2$,
with each step summarized in \Cref{alg:ExpUpdate}.
It is important to note that this algorithm is correct even if $f$ is not increasing,
as long
as~$f(s) < 0$ when~$s < s_0$ and~$f(s) > 0$ when~$s > s_0$, as is the
case in this work when $f$ is the derivative of the negative YJ log-likelihood.

\Cref{illustrationSecYJ} is an illustration of \expYJ\ that is based on exponential search.
\begin{figure}[!ht]
     \centering
       \includegraphics[width=0.48\textwidth]{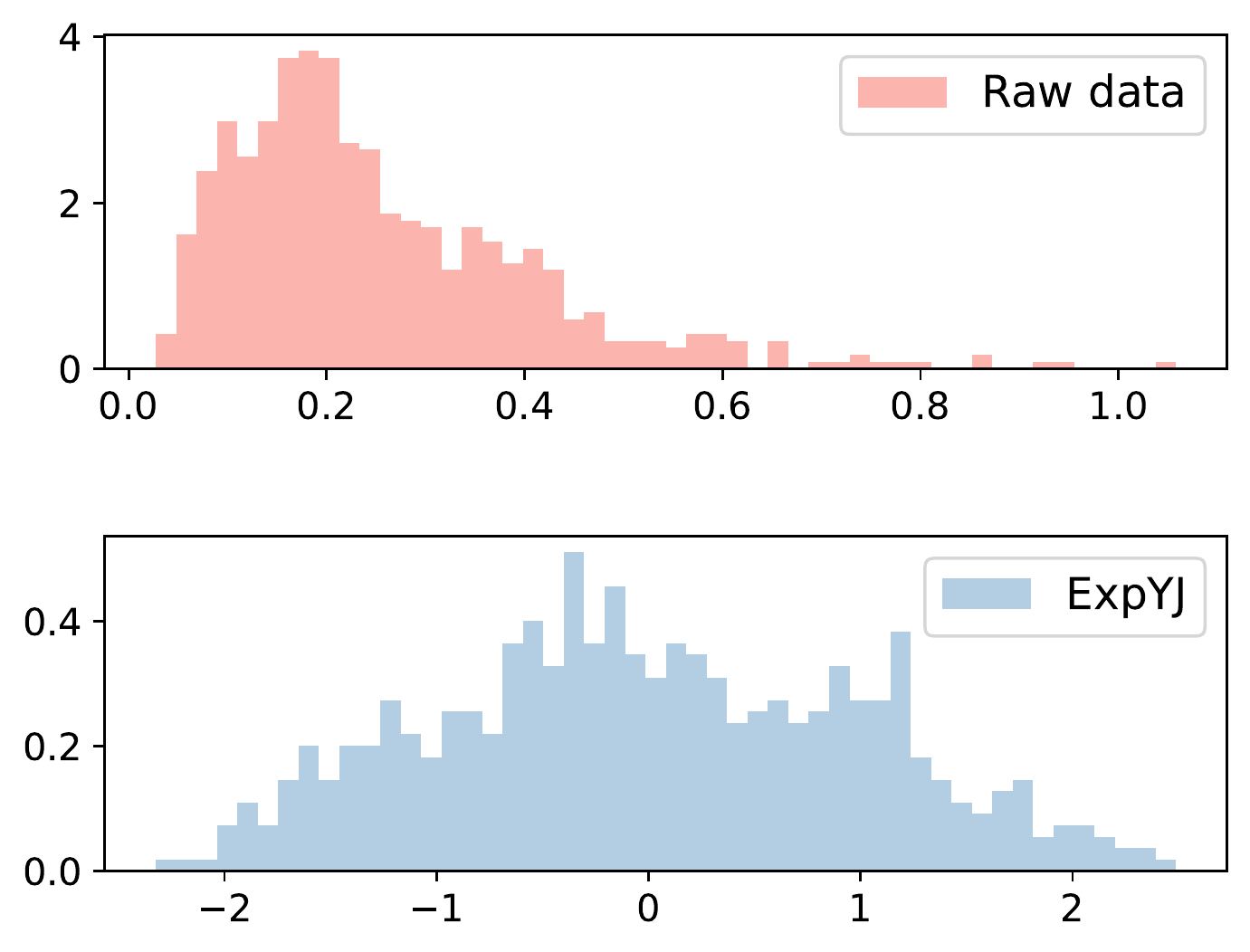}
      \includegraphics[width=0.48\textwidth]{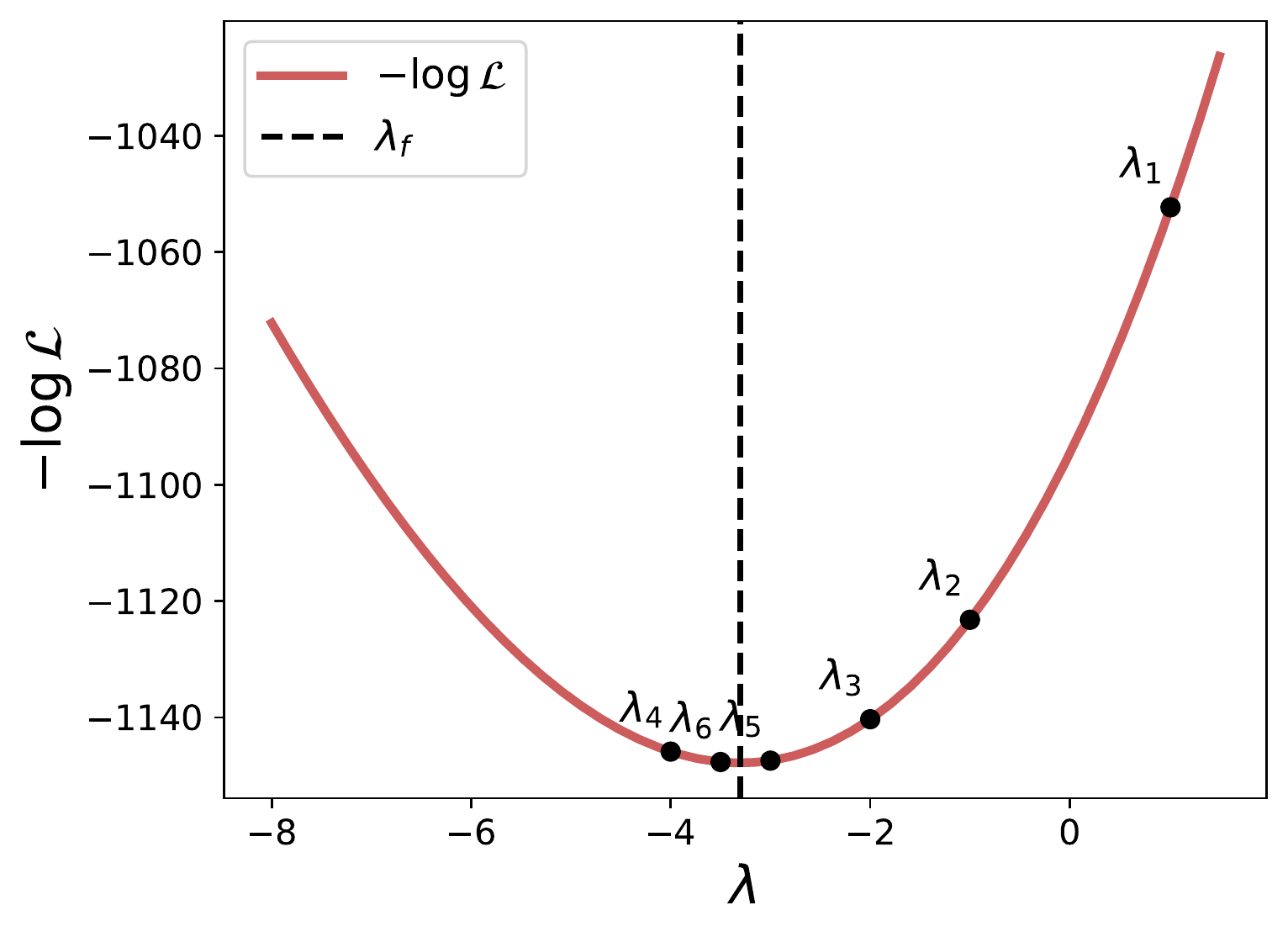}
\caption{An example of \expYJ\ applied to the largest perimeter of the cells in each sample of the {\it Breast Cancer} dataset. {\em Left:} histogram of the pooled dataset before ({\it top}) and after ({\it bottom}) applying a YJ transformation with fitted parameters.
{\em Right:} negative log-likelihood of the YJ transformation as a function
of $\lambda$. The points~$\lambda_t$ correspond to the values taken
by \expYJ\ during exponential search.}
\label{illustrationSecYJ}
\end{figure}

A natural extension of the exponential search is to replace the binary search into a $k$-ary search during the dichotomic search. In that case, $k-1$ values of $f$ are computed at each round. Such a modification reduces the number of steps required for a given accuracy while increasing the number of operations performed at each step.

\section{Proof of \Cref{th:convex}}
\label{app:Proof}
We first introduce some lemmas that will be required for the main proof.
\begin{lemma} \label{lemma:lnconvex}
Let $\lambda \mapsto f_i(\lambda)$, $i=1,\dots, I$ be positive
and twice differentiable functions, such that for
all $i$, $\lambda \mapsto \ln [f_i (\lambda)]$ is convex.
Then $\lambda \mapsto \ln[\sum_i f_i(\lambda)]$ is also convex.
\end{lemma}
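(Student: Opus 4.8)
The plan is to prove the equivalent statement that $g \defeq \sum_i f_i$ is log-convex, using the second-order characterization: a positive, twice differentiable function $g$ has $\ln g$ convex if and only if $g\,g'' \geq (g')^2$ pointwise. Indeed, $(\ln g)'' = \big(g''g - (g')^2\big)/g^2$, and $g > 0$ as a sum of positive functions, so the sign of $(\ln g)''$ coincides with that of $g\,g'' - (g')^2$. Thus it suffices to establish this pointwise inequality.

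First I would rewrite the hypothesis in the same form. Since each $f_i$ is positive and $\lambda \mapsto \ln f_i(\lambda)$ is convex, applying the same characterization to $f_i$ gives $f_i\,f_i'' \geq (f_i')^2$, equivalently $f_i'' \geq (f_i')^2/f_i$ pointwise (division is legitimate because $f_i > 0$). Summing over $i = 1, \dots, I$ yields $g'' = \sum_i f_i'' \geq \sum_i (f_i')^2/f_i$.

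The key step is then a single application of the Cauchy–Schwarz inequality. With the vectors $a_i = f_i'/\sqrt{f_i}$ and $b_i = \sqrt{f_i}$ (well-defined since $f_i > 0$), one has $a_i b_i = f_i'$, $\sum_i a_i^2 = \sum_i (f_i')^2/f_i$, and $\sum_i b_i^2 = \sum_i f_i = g$, so
\[
\Big(\sum_i \frac{(f_i')^2}{f_i}\Big)\, g \;=\; \Big(\sum_i a_i^2\Big)\Big(\sum_i b_i^2\Big) \;\geq\; \Big(\sum_i a_i b_i\Big)^2 \;=\; \Big(\sum_i f_i'\Big)^2 \;=\; (g')^2.
\]
Combining this with the previous display gives $g\,g'' \geq \big(\sum_i (f_i')^2/f_i\big)\, g \geq (g')^2$, which is exactly the desired pointwise inequality, so $\ln g$ is convex.

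The argument is essentially routine; the only points requiring care are making the reduction to the pointwise inequality $g\,g'' \geq (g')^2$ explicit and verifying $g > 0$ so that no division by zero occurs in either the reduction or the choice of Cauchy–Schwarz vectors. (A derivative-free alternative would be to combine the log-convexity estimate $f_i(\theta\lambda + (1-\theta)\mu) \leq f_i(\lambda)^\theta f_i(\mu)^{1-\theta}$ with Hölder's inequality applied to the finite sum over $i$; but since the lemma already assumes twice differentiability, the Cauchy–Schwarz computation above is the most direct route.)
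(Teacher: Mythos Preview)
Your proof is correct and follows essentially the same approach as the paper: both reduce to the second-order characterization $g\,g'' \ge (g')^2$ and establish it from $f_i f_i'' \ge (f_i')^2$ via a Cauchy--Schwarz-type inequality. The only cosmetic difference is that the paper isolates the needed inequality as a separate lemma (if $a_i,b_i\ge 0$ and $a_ib_i\ge c_i^2$ then $(\sum a_i)(\sum b_i)\ge(\sum c_i)^2$, applied with $a_i=f_i$, $b_i=f_i''$, $c_i=f_i'$), whereas you divide by $f_i$ first and invoke the standard Cauchy--Schwarz inequality directly.
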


\begin{proof}

The proof of \Cref{lemma:lnconvex} is based on the following lemma:
\begin{lemma}
\label{lemma:inequality}
Let $\{a_i\}_{i=1\dots I}, \{b_i\}_{i=1\dots I}, (c_i)_{i=1\dots I}$ be real numbers,
such that for all $ 1 \leq i \leq I$:  $a_i \geq 0$, $b_i \geq 0$ and $a_i  b_i \geq c_i^2$. Then
$$\left(\sum_{i=1}^I a_i\right) \left(\sum_{i=1}^I b_i\right) \geq \left(\sum_{i=1}^I c_i \right)^2.$$
\end{lemma}
Indeed, it holds,
\begin{equation}
\left(\sum_{i=1}^I a_i\right)\left(\sum_{i=1}^I b_i\right) - \left(\sum_{i=1}^I c_i \right)^2
= \left(\sum_{i=1}^I a_i b_i - c_i^2\right)  + \left(\sum_{i=1}^I \sum_{i<j\leq I}a_i b_j + a_j b_i - 2 c_i c_j \right).
\end{equation}
The first sum contains only non-negative terms as $\forall i,\ a_i  b_i \geq c_i^2$.
Recalling that $a_i \geq 0$ and $b_i \geq 0$,
the second sum also contains non-negative terms
as $a_i b_j + a_j b_i - 2 c_i c_j \geq a_i b_j + a_j b_i - 2 \sqrt{a_i b_i}\sqrt{a_j b_j} = (\sqrt{a_i b_j} - \sqrt{a_j b_i})^2 \geq 0$

Now let us prove \Cref{lemma:lnconvex}.
The convexity and twice differentiability of $\ln [f_i (\lambda)]$ implies
that~$\partial^2_\lambda \ln [f_i (\lambda)] \geq 0$ and therefore
that
\begin{align}
\label{eq:lemmaproofeq}
f_i \partial^2_\lambda f_i - (\partial_\lambda f_i)^2 \geq 0.
\end{align}
As $f_i > 0$, we can conclude from Eq.~\ref{eq:lemmaproofeq} that $\partial_\lambda^2 f_i \geq 0$.
Using \Cref{lemma:inequality} and the linearity of the derivative, we have:
\begin{align}
\left(\sum_i f_i \right) \partial^2_\lambda \left(\sum_i  f_i\right) -
\left( \partial_\lambda  \sum_i f_i\right)^2 \geq 0.
\end{align}
which means that $\partial^2_\lambda \ln (\sum_i f_i) \geq 0$.

\end{proof}

\begin{lemma} \label{lemma:addsigma}
Let $\lbrace \alpha_i \rbrace_{i = 1, \dots, n_\alpha}$ and $\lbrace \beta_i \rbrace_{i = 1, \dots n_\beta}$
be two non-empty sets of real numbers, and let us denote
$\lbrace \gamma_i \rbrace_i = \{\alpha_1,\dots, \alpha_{n_\alpha}, \beta_1, \dots, \beta_{n_\beta}\}$
and $n_\gamma = n_\alpha + n_\beta$.
Let $\bar \alpha = \frac{1}{n_\alpha} \sum_i \alpha_i$,  $\bar \beta = \frac{1}{n_\beta} \sum_i \beta_i$,
$\bar \gamma = \frac{1}{n_\gamma} \sum_i \gamma_i$
and $\sigma_\alpha^2 = \frac{1}{n_\alpha}\sum_i (\alpha_i - \bar \alpha)^2$,
$\sigma_\beta^2 = \frac{1}{n_\beta}\sum_i (\beta_i - \bar \beta)^2$,
$\sigma_\gamma^2 = \frac{1}{n_\gamma}\sum_i (\gamma_i - \bar \gamma)^2$. Then:
\begin{equation}
\sigma_\gamma^2 =  \frac{n_\alpha}{n_\gamma} \sigma_\alpha^2 + \frac{n_\beta}{n_\gamma} \sigma_\beta^2 + \frac{n_\alpha n_\beta}{n_\gamma^2} (\bar \alpha- \bar \beta)^2.
\end{equation}
\end{lemma}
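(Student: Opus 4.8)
The plan is to reduce everything to the elementary identity $\sum_i (x_i - \bar x)^2 = \sum_i (x_i - m)^2 - n(\bar x - m)^2$ (valid for any reference point $m$), applied with $m = \bar\gamma$ to each of the two groups, and then to eliminate $\bar\gamma$ using the fact that it is the $(n_\alpha, n_\beta)$-weighted average of $\bar\alpha$ and $\bar\beta$. The non-emptiness hypothesis is exactly what guarantees that $\bar\alpha$, $\bar\beta$ and all divisions by $n_\alpha, n_\beta, n_\gamma$ make sense.

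First I would split the sum defining $\sigma_\gamma^2$ along the partition $\{\gamma_i\} = \{\alpha_i\} \cup \{\beta_i\}$, writing $n_\gamma \sigma_\gamma^2 = \sum_i (\alpha_i - \bar\gamma)^2 + \sum_i (\beta_i - \bar\gamma)^2$. In the first piece I insert the group mean, $\alpha_i - \bar\gamma = (\alpha_i - \bar\alpha) + (\bar\alpha - \bar\gamma)$, and expand the square; the cross term vanishes because $\sum_i (\alpha_i - \bar\alpha) = 0$, leaving $\sum_i(\alpha_i - \bar\gamma)^2 = n_\alpha \sigma_\alpha^2 + n_\alpha(\bar\alpha - \bar\gamma)^2$, and symmetrically for the $\beta$ piece. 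Hence $n_\gamma \sigma_\gamma^2 = n_\alpha\sigma_\alpha^2 + n_\beta\sigma_\beta^2 + n_\alpha(\bar\alpha - \bar\gamma)^2 + n_\beta(\bar\beta - \bar\gamma)^2$.

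Then I would substitute $\bar\gamma = (n_\alpha \bar\alpha + n_\beta\bar\beta)/n_\gamma$, which gives $\bar\alpha - \bar\gamma = \frac{n_\beta}{n_\gamma}(\bar\alpha - \bar\beta)$ and $\bar\beta - \bar\gamma = \frac{n_\alpha}{n_\gamma}(\bar\beta - \bar\alpha)$. Plugging these in, the last two terms combine into $\frac{n_\alpha n_\beta^2 + n_\beta n_\alpha^2}{n_\gamma^2}(\bar\alpha - \bar\beta)^2 = \frac{n_\alpha n_\beta}{n_\gamma}(\bar\alpha - \bar\beta)^2$, using $n_\alpha + n_\beta = n_\gamma$. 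Dividing through by $n_\gamma$ yields the claimed identity.

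There is no genuine obstacle here: the statement is a purely algebraic bookkeeping identity (a finite-sample law of total variance), and the only points needing care are tracking the weights $n_\alpha/n_\gamma$ and $n_\beta/n_\gamma$ correctly and invoking non-emptiness. As an alternative route I would mention expanding each of the three variances as $\frac1n\sum_i x_i^2 - \bar x^2$, which after substituting $\bar\gamma$ reduces the claim to verifying the polynomial identity $n_\gamma(n_\alpha\bar\alpha^2 + n_\beta\bar\beta^2) - (n_\alpha\bar\alpha + n_\beta\bar\beta)^2 = n_\alpha n_\beta(\bar\alpha - \bar\beta)^2$ directly.
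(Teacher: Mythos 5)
Your proof is correct: the decomposition $\alpha_i - \bar\gamma = (\alpha_i - \bar\alpha) + (\bar\alpha - \bar\gamma)$ with vanishing cross terms, followed by the substitution $\bar\gamma = (n_\alpha\bar\alpha + n_\beta\bar\beta)/n_\gamma$, gives exactly the claimed identity. The paper itself offers no details (it states only that the identity is ``easily obtained using the definitions''), so your argument is simply a complete write-up of the same elementary computation the authors have in mind.
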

\begin{proof}
This identity is easily obtained using the definitions
of $\sigma_\alpha^2, \sigma_\beta^2$ and $\sigma_\gamma^2$.
\end{proof}

\subsection{Proof of \Cref{th:convex}}

\begin{proof}

We start by proving the only shows the convexity of $-\log \mathcal{L}_{\mathrm{YJ}}(\lambda)$, and prove strict convexity in \Cref{app:strictconvexity}.

Let $\{x_i\}_{i=1\cdots n}$ be our data points and let us split this dataset
into non-negative values $\{x^+_i\} = \{x_i | x_i \geq 0\}$ and negative
values $\{x^-_i\} = \{x_i | x_i < 0\}$.
Let $\gamma_i = \Psi(\lambda, x_i)$, $\alpha_i = \Psi(\lambda, x_i^+)$,
and~$\beta_i = \Psi(\lambda, x^-_i)$. We denote $n_\alpha , n_\beta, n_\gamma$
the lengths of the sets $\{\alpha_i\}$, $\{\beta_i\}$ and 
$\{\gamma_i\}$. For clarity, let us consider the case where both $\{x_i^+\}$ and $\{x_i^-\}$
have at least two distinct items and
therefore~$n_\alpha \geq 1$, $n_\beta \geq 1$ and~$\sigma^2_\alpha > 0$, $\sigma^2_\beta > 0$.
We relegate to \Cref{app:edge} the other edge cases.
According to \Cref{lemma:addsigma}, the expression of negative log-likelihood of
the YJ transformation provided in Eq.~\eqref{eq:loglikelihoodYJ2} can be reformulated as:
\begin{align}
- \log \mathcal{L}_{\mathrm{YJ}}(\lambda) = &\frac{n}{2}\log (2\pi)  - (\lambda-1)\sum_{i=1}^n \mathrm{sign}(x_i) \log (|x_i|+1)\nonumber \\
& + \ln \left(\frac{n_\alpha}{n_\gamma} \sigma_\alpha^2 + \frac{n_\beta}{n_\gamma} \sigma_\beta^2 + \frac{n_\alpha n_\beta}{n_\gamma^2} (\bar \alpha- \bar \beta)^2 \right).
\label{eq:splitlog}
\end{align}
The first term is constant and the second one is linear in $\lambda$
so we only have to prove the convexity of the last term to prove that
the full negative log-likelihood is convex.
Using \Cref{lemma:lnconvex}, we only need to show
that $\lambda \mapsto \ln \left(\frac{n_\alpha}{n_\gamma} \sigma_\alpha^2\right)$,
$\lambda \mapsto \ln \left(\frac{n_\beta}{n_\gamma} \sigma_\beta^2 \right)$
and $\lambda \mapsto \ln \left(\frac{n_\alpha n_\beta}{n_\gamma^2} (\bar \alpha- \bar \beta)^2 \right)$
are convex. We can get rid of the constant factor and show
that $\lambda \mapsto \ln \left(\sigma_\alpha^2\right)$,
$\lambda \mapsto \ln \left( \sigma_\beta^2 \right)$
and $\lambda \mapsto \ln \left((\bar \alpha- \bar \beta)^2 \right)$ are convex.

The key idea of the proof is to use the fact that, according
to \cite{kouider1995concavity}, for any set of positive
real numbers~$\{a_i\}$, $\lambda \mapsto \ln \sigma [\Phi(\lambda,  \{a_i\}]$ is convex,
where $\Phi(\lambda, \cdot)$ denotes the Box-Cox transformation. Besides we have (see \Cref{app:boxcox}):
\begin{align}
\alpha_i = \Psi(\lambda, x^+_i) &= \Phi(\lambda,  x^+_i  +1), \label{eq:trick1}\\ 
\beta_i = \Psi(\lambda, x^-_i) &= - \Phi(2-\lambda, 1 - x^-_i).\label{eq:trick2}
\end{align}
Therefore $\ln \sigma [\alpha_i] =  \ln \sigma [\Phi(\lambda,  (x^+_i +1)]$ which
is a convex function of $\lambda$.
Similarly, $\sigma[\{-\Phi(2-\lambda, 1 - x^-_i)\}]^2 = \sigma[\{\Phi(2-\lambda, 1 - x^-_i)\}]^2$.
The function $\lambda \mapsto \sigma[\{\Phi(2-\lambda, 1 - x^-_i)\}]^2$ is convex as the composition
of the linear function $\lambda \mapsto 2-\lambda$ with the convex
function $\lambda \mapsto \sigma[\{\Phi(\lambda, 1 - x^-_i)\}]^2$.

Let us finally prove the convexity of $\lambda \mapsto \ln \left[(\bar \alpha- \bar \beta)^2 \right]$.
We recall that $\bar \alpha > 0$ and $\bar \beta < 0$ and
that $\ln \left[(\bar \alpha- \bar \beta)^2 \right] =  2 \ln \left[\bar \alpha- \bar \beta \right]$.
Using \Cref{lemma:lnconvex}, we only need to prove
that~$\lambda \mapsto \ln \left(\bar \alpha\right)$
and~$\lambda \mapsto \ln \left(-\bar \beta\right)$ are convex.
As $\bar \alpha$ and $\bar \beta$ are defined as sums, still
using \Cref{lemma:lnconvex}, we only need to prove
that~$\lambda \mapsto \ln \left(\Psi(\lambda, x^+_i)\right)$
and~$\lambda \mapsto \ln \left(-\Psi(\lambda, x^-_i)\right)$ are convex for any $i$.
Using, Eqs.~\eqref{eq:trick1} and~\eqref{eq:trick2}, it is sufficient to prove
that for any real number $a \geq 1$, the
function~$\lambda \mapsto \ln [\Phi(\lambda,  (a)] = \ln [(a^\lambda-1)/\lambda]$ is convex,
which is proved in \Cref{app:ProofD}.

\end{proof}

\subsection{Proof that $\lambda \mapsto \ln [\Phi(\lambda,  (a)]$ is convex}
\label{app:ProofD}
Let $a \geq 1$ $u(\lambda) = (a^\lambda-1)/\lambda$ and $g(\lambda) = \ln u(\lambda)$.
For~$\lambda \neq 0$, the second derivative of $g$ is positive if and only if $D \defeq \lambda^4 (u u'' - (u')^2) \geq 0$.

We have
$$D(a, \lambda) = a^{2\lambda} - a^\lambda \lambda^2 \log(a)^2 - 2a^\lambda + 1.$$
Let us show that $D \geq 0$ when $\lambda \neq 0$.
$D(a=1, \lambda) = 0$, so we just need to show that $\partial_a D(a, \lambda) > 0$ when $a > 0$. As
$$\partial_a D(a, \lambda) = a^{(\lambda - 1)}\lambda(2a^\lambda - \lambda^2 \log(a)^2 - 2\lambda\log(a) - 2),$$
let us define $T(a,\lambda)$ as:
$$T(a, \lambda)= (2a^\lambda - \lambda^2 \log(a)^2 - 2\lambda\log(a) - 2).$$
We just need to show that $T(a, \lambda) > 0$ when $\lambda > 0$ and $T(a, \lambda) < 0$
when $\lambda < 0$. As $T(a, 0) = 0$, we just need to show that $\partial_\lambda T(a,\lambda) > 0$ when $\lambda \neq 0$.
$$\partial_\lambda T(a,\lambda) = 2(a^\lambda - \lambda\log(a) - 1)\log(a).$$
As $a > 1$, $\log(a) > 0$, so we just need to show that $(a^\lambda - \lambda\log(a) - 1) > 0$
which can be done by replacing $x$ by $\lambda\log(a)$ in the following inequality: $\exp(x) > x+1$ for $x >0$.

To conclude, when $\lambda \neq 0$ and $a \geq 1$, $D(\lambda, a) \geq 0$, and if $a > 1$ then $D(\lambda, a) > 0$. Therefore, the second derivative of $g$ is positive for any $\lambda \geq 0$. Using continuity, we can conclude that the second derivative of $g$ is positive for any $\lambda$ and that $\lambda \mapsto \ln [\Phi(\lambda,  (a)]$ is convex.

Note that if $a > 1$, then $D > 0$ and we can conclude that $\lambda \mapsto \ln [\Phi(\lambda,  (a)]$ is strictly convex.

\subsection{Edge cases not covered by the main proof of \Cref{th:convex}}
\label{app:edge}
In the main proof we assume that $n_\alpha \geq 2$, $n_\beta \geq 2$ and that $\sigma^2_\alpha > 0$, $\sigma^2_\beta > 0$. Said otherwise, we assume
that both $\{x_i^+\}$ and $\{x_i^-\}$ have at least two distinct elements. The proof is almost unchanged if this is not the case, as we can discard any term inside the logarithm of Eq.~\ref{eq:splitlog}. For example, let's assume that $n_\alpha = 1$. Therefore $\sigma^2_\alpha = 0$. We can then rewrite Eq.~\ref{eq:splitlog} as:
\begin{align}
- \log \mathcal{L}_{\mathrm{YJ}} = &\frac{n}{2}\log (2\pi)  - (\lambda-1)\sum_{i=1}^n \mathrm{sign}(x_i) \log (|x_i|+1)\nonumber \\
& + \ln \left(\frac{n_\beta}{n_\gamma} \sigma_\beta^2 + \frac{n_\alpha n_\beta}{n_\gamma^2} (\bar \alpha- \bar \beta)^2 \right).
\label{eq:splitlog2}
\end{align}
We only need to show
that $\lambda \mapsto \ln \left(\frac{n_\beta}{n_\gamma} \sigma_\beta^2 \right)$
and $\lambda \mapsto \ln \left(\frac{n_\alpha n_\beta}{n_\gamma^2} (\bar \alpha- \bar \beta)^2 \right)$
are convex as in the main proof.

Any other edge case can be treated similarly, and the proof holds as soon as $\{x_i\}$ has at least two distinct elements.

\subsection{Strict convexity of the Yeo-Johnson negative log-likelihood.}
\label{app:strictconvexity}
To prove the strict convexity of the YJ negative log-likelihood, let us notice that under the hypotheses of \Cref{lemma:lnconvex}, if at least one function $\lambda \mapsto \ln(f_i)$ is strictly convex, then $\lambda \mapsto \ln[\sum_i f_i(\lambda)]$ is strictly convex. Besides, according to \cite{kouider1995concavity}, for any set of positive
real $\{a_i\}$ with at least two distinct elements, $\lambda \mapsto \ln \sigma [\Phi(\lambda,  (a_i)]$ is strictly convex. Therefore, in the case where either $\{x^+_i\}$ or $\{x^-_i\}$ has two distinct elements, we can conclude that the YJ negative log-likelihood is strictly convex.

The only problematic case is when both $\sigma^2_\alpha = 0$ and $\sigma^2_\beta = 0$. In that case $\{x_i\}$ has only two distinct element: one positive or null and one strictly negative. In that case, $\lambda \mapsto \sigma[\{\Phi(\lambda, 1 - x^-_i)\}]^2$ is strictly convex as $\lambda \mapsto \ln [\Phi(\lambda,  (a)] = \ln [(a^\lambda-1)/\lambda]$ is strictly convex for $a> 1$.

\section{Secure Multi-Party Computation}
\label{app:SMC}
\subsection{Shamir Secret Sharing}
\label{app:SSS}
Secure Multiparty Computation (SMC) consists in evaluating functions without disclosing their inputs. One way to achieve this result is to use secret sharing. The main idea is that a value $h$ is split into different secret shares $h_k$, $k=1, \cdots , K$ where $K$ is the number of clients. Each client $k$ only knows the value of the secret share $h_k$, and one needs at least $p$ shares with $1 < p \leq K$ to recover the initial value $h$. The set of the secret shares $h_k$ of $h$ is denoted $\llbracket h \rrbracket$. Schematically, SMC consists in three main steps: (i) {\it secret sharing}, where each client splits its input into secret shares and sends them to the other clients (ii) {\it computation}, where the clients perform mathematical computations on the secret shares and obtain secret shares of the output and (iii) {\it reveal} steps, where the clients send each other the secret shares of the output in order to reconstruct and reveal the output.

In the Shamir Secret Sharing method \cite{shamir1979share}, the secret shares of $h$ correspond to the values of a given polynomial $P_h(x)$ of order $K$ at different points $x_k$ where $P_h(0) = h$. The values $x_k$ are arbitrarly chosen by the protocol with the constraint that all $x_k$ should be distinct. If all the clients disclose their secret share $h_k = P_h(x_k)$, then the secret $h$ can be recovered by polynomial interpolation. In this framework the addition can be done trivially. If $\llbracket h \rrbracket = \{h_k\}_{k=1,\cdots K}$ and $\llbracket g \rrbracket = \{g_k\}_{k=1,\cdots K}$ are the shares of $g$, then $\llbracket g+h \rrbracket = \{g_k+h_k\}_{k=1,\cdots K}$ are shares of $g+ h$. Said otherwise, $\llbracket g+h \rrbracket = \llbracket g \rrbracket + \llbracket h \rrbracket$. Therefore adding two shared secrets requires no communication between the clients. Similarly, multiplying a shared secret by a public constant $c$ is done without communication as $\llbracket c g\rrbracket = c \llbracket g\rrbracket$. However, multiplying two shared secrets, i.e.\ computing shares of $\llbracket g h \rrbracket$ is more involved and requires one round of communication. More precisely, each client has to send one scalar quantity to all the other clients during this process, as explained for example in \cite{reistad2012general}, section 3.

\subsection{Fixed-Point Representation}
\label{app:SMCFPR}
The secret shares in SMC belong to a finite set $\mathbb{Z}_p$ where $p$ is a prime number and all the operations are integer operations done modulo $p$. In practice we consider integers encoded using $l$ bits, then we choose the smallest prime number $p$ such that $2^l < p$ and we perform each operation modulo $p$. Therefore any value has to be encoded as an integer using a finite number of bits. To encode negative integers, we consider that encoded integers between $0$ and $2^{l-1}-1$ are positive and encoded integers between $2^{l-1}$ and $2^l -1$ are negative. We have to choose a value of $l$ large enough such that the highest absolute value considered is below $2^{l-1}$.
Real-value numbers are encoded using fixed-point precision, as described in \cite{catrina2010secure}, where the $f$ least significant bits of the encoding correspond to the decimal part, and the $l-f$ most significant bits correspond to the integer part. The addition of two fixed-point numbers in SMC can be done as described in \Cref{app:SSS}. However, multiplying two fixed-point representation numbers in SMC is more complex as the result must be divided by $2^f$, i.e.\ the $2^f$ least significant bits are discarded. As explained in detail in \cite{catrina2010secure}, multiplying two fixed-point numbers requires two rounds of communication (instead of one round of communication for the multiplication of two integers).

\subsection{Comparison in SMC}
\label{app:SMCcomparison}
In \secYJ\ , we need to compute in SMC the sign of an expression, which is equivalent to making a comparison with $0$. As we are using fixed-point representation encoding, computing the sign amounts to computing the most significant bit of the binary decomposition of a given shared secret. In order to do so, we use the method described in \cite{reistad2007secret}, which works for any SMC framework supporting addition and multiplication.  This method requires $10$ rounds of communication among the clients ($6$ of which can be done offline, i.e.\ they correspond to random values exchanged beforehand and can be done regardless of the value of the input). During these 10 rounds of communication,~$153l + 423 \log l+24$ multiplications are performed, $135l + 423 \log l+16$ of which can also be done offline. Notice that other SMC primitives could be used, such as the one described in \cite{escudero2020improved} which provides more efficient way to do SMC comparison. 

\subsection{MPyC}
\label{app:MPyC}
To implement \secYJ\ we used the python library MPyC \cite{schoenmakers2018mpyc}. MPyC is based built upon VIFF framework \cite{damgaard2009asynchronous} and is based on Shamir Secret Sharing \cite{shamir1979share}. We refer to \cite{lorunser2020performance} for a discussion of the performance of this library for various SMC tasks.

\subsection{Further details on \Cref{alg:SecureFedYJ}}
\label{app:fdAgo}
The pseudo-code provided in \Cref{alg:SecureFedYJ}, is a schematic overview of \secYJ and relies on the SMC routines described above. For example, the following line of the pseudo-code:

\begin{equation}
\llbracket S_\varphi \rrbracket = \sum_k \llbracket S_{k, \varphi}\rrbracket
\end{equation}
implies that: (i) each client $k$ computes $S_{k, \varphi}$, divide it into secrets and send these share secrets to all the other clients; (ii) Using the SMC routines described in \Cref{app:SSS,app:SMCFPR,app:MPyC}, the clients compute together the share secrets of $\llbracket S_\varphi \rrbracket$ where $S_\varphi = \sum_k S_{k, \varphi}$. After this step in \Cref{alg:SecureFedYJ}, the value of $S_\varphi$ is therefore shared using share secrets across all the clients. Notice that the server only plays an orchestration roles in this process.

   \begin{figure}
      \centering
        \includegraphics[width=0.48\textwidth]{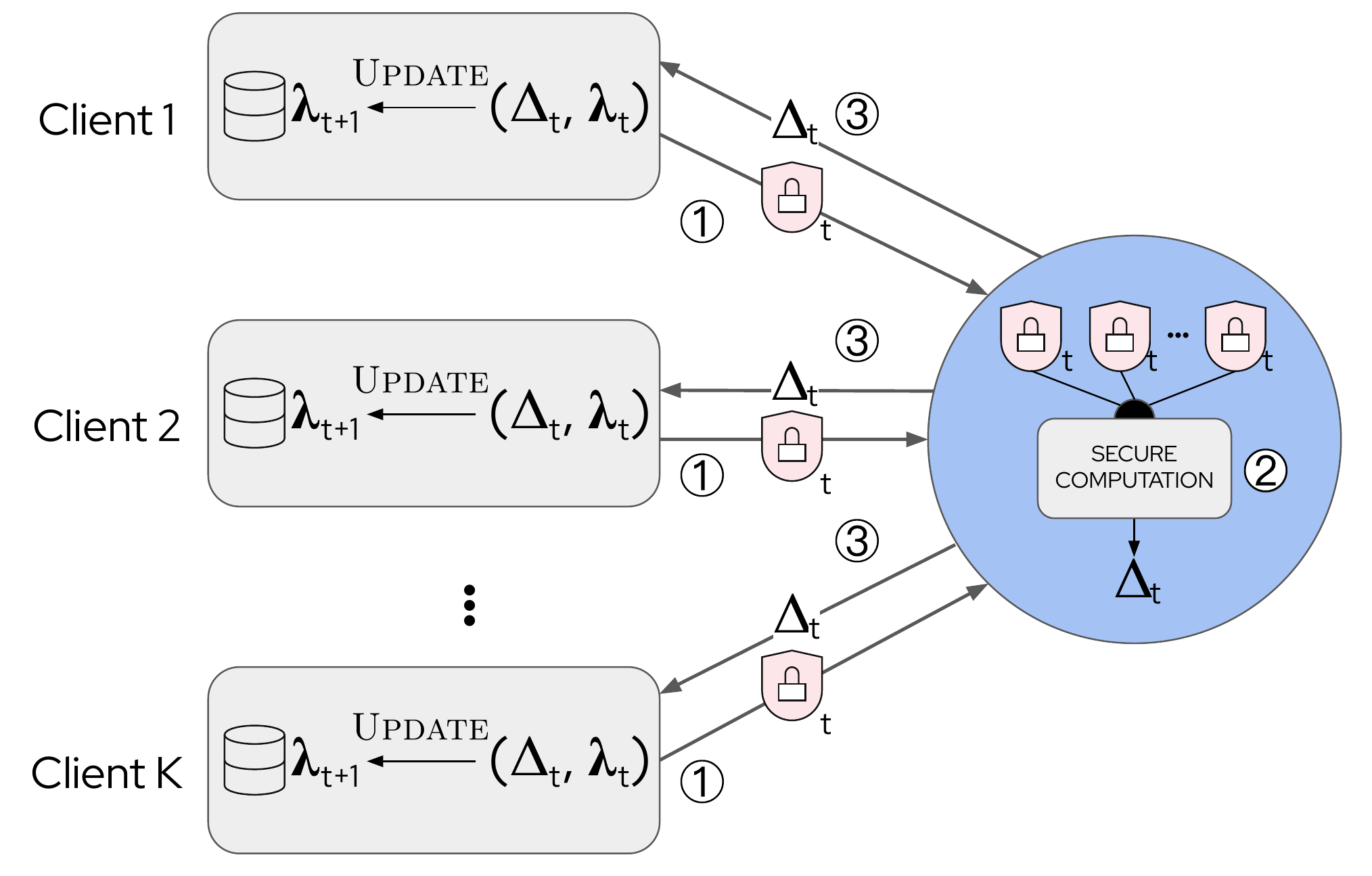}
\caption{Simplified view of one round of \secYJ . \numcircledmod{1}: All clients compute local, data-dependent quantities. \numcircledmod{2}: $\Delta_t$ is computed using SMC.  Data-dependent quantities computed by each client are not disclosed during the process. \numcircledmod{3}: $\Delta_t$ is disclosed to all the clients, a new value $\lambda_{t+1}$ is computed using an exponential update.
   }\label{schemaYJ}
\end{figure}

\subsection{Complexity of \secYJ\ }
\label{app:SMCrounds}
At each step of the exponential search, we share 6 secrets (the values of $S_g$), perform 10 fixed-point multiplications (including multiplying and dividing by $n$), and one comparison (i.e.\ computing the sign of $\partial_\lambda \mathcal{L}_{\mathrm{YJ}}$.

The 6 secrets can be shared in parallel in one round of communication. Some of the multiplications can also be done in parallel, and only 3 successive rounds of multiplications have to be performed, which require 6 rounds of communications. As stated in \Cref{app:SMCcomparison}, the comparison requires 10 rounds of communications. Revealing the secret $\Delta$ also requires one round of communication. Notice that the additions do not require any round of communication. This amounts to $18$ communications per exponential search step. Besides, computing $\llbracket S_\phi \rrbracket$ at the beginning of the algorithm and computing and revealing $\mu_*$ and $\sigma^2_*$ at the end of the algorithm requires 6 more rounds of communication. Overall, performing $40$ steps of exponential search with \secYJ\ costs $18\times 40 + 6 = 726$ rounds of communications.

For each elementary operation, such as sharing a secret, revealing a secret or making a multiplication, the order of magnitude of the size of the message sent by each client to the other clients is $\lceil \log_2(p) \rceil$ bits. Notice that $\log_2(p)$ is of the same order of magnitude of $l$ as $p$ is the smallest prime number above~$2^l$. More precisely, each client sends around $l$ bits to each of the other clients for these elementary operations. The overall size of the messages exchanged during the 726 rounds of communications mentioned above is mainly dominated by the $153l + 423 \log l+24$ multiplications done at each of the 40 comparisons. Taking $l = 100$, we find that each client sends overall around~$6.5\ 10^7$ bits (or $\sim 8$ Mega-bytes) to each of the other clients during \secYJ.

\section{Details of the numerical experiments}
\label{app:NR}
\subsection{Datasets used in this work}
\label{sec:datasets}
\paragraph{Datasets exposed by {\it scikit-learn} API used in \Cref{fig:brent_exp_comparison} and \Cref{fig:hetero_homo_comparison}}

For numerical experiments, we use four public datasets available in the UC Irvine Machine Learning repository \cite{Dua:2019} under a Creative Commons Attribution 4.0 International (CC BY 4.0) license and exposed by the {\it scikit-learn}
datasets API. These datasets are the {\it Iris dataset} \cite{fisher1936use} (150 samples, 4 features),
the {\it Wine Data Set} (178 samples, 13 features), the {\it Optical Recognition of
Handwritten Digits Data Set} (1797 samples, 64 features) and the
{\it Breast Cancer Wisconsin (Diagnostic) Data Set} (569 samples, 30 features). Only keeping features that have at least two distinct values, these datasets provide a total of 108 different features.

\paragraph{Extra UC Irvine Machine Learning repositories used to test Brent minimization method}

\paragraph{Genomic data used in \Cref{fig:tcga_exp}}
For genetic experiments, we rely on RNA-seq expression data from The Cancer Genome Atlas, expressed in Fragments per Kilobase Million (FPKM).
We focus on 3 cancers: colorectal cancer (COAD), lung cancer (LUAD + LUSC), and pancreatic adenocarcinoma (PAAD).
These datasets are available on \url{https://portal.gdc.cancer.gov/} under Open Access.

\subsection{Experiments on TCGA data}
\label{subsec:tcga_hyperparams}
Based on FPKM counts, we load all available data for each cancer of interest, removing genes with null expression for all samples.

\paragraph{Pipeline} Our pipeline consists of three steps:
\begin{enumerate}
     \item Normalization: either \textit{whitening}, \textit{log}, or Yeo-Johnson transformation;
     \item Dimensionality reduction: a PCA was applied on normalized data to reduce dimension
     (dimension $128$ for lung and colorectal cancer, $90$ for pancreatic cancer);
     \item Cox Proportional Hazards (CoxPH)~\cite{cox1972regression} model fitting.
\end{enumerate}

\paragraph{Normalization}
All normalization steps are performed on counts, regardless of the genes, as counts are related to the
same underlying phenomenon induced by next-generation RNA sequencing.
In other words, for the plain whitening, a single mean and variance is computed.
For \textit{log}, following application of $\log(1+\cdot)$ to all entries, a similar count-level whitening
is performed.
For the YJ transformation, we perform $10$ iterations of the proposed algorithm.

\paragraph{CoxPH model training}
CoxPH models are fitted with \textit{lifelines} (0.26.4).
We use an $\ell_2$ regularization of magnitude $10$ for each cancer, without any hyperparameter optimization.

\paragraph{Cross-validation}
Results are computed following $5$-fold stratified group cross-validation, repeated~$5$ times with different seeds.
Stratification is performed to ensure a balanced set of censored patients in each fold, while ensuring that samples
belonging to the same patients end up in the same group to avoid over-estimating the generalization of the model.

\subsection{Experiment on synthetic data}
\label{app:hyperparams}

To generate the results of \Cref{subfig:mock_1}, we sampled for each of the 10 centers 200 datapoints using Eq.~\eqref{eq:covariates}. We then apply an optional preprocessing steps before fitting a linear regression model using scikit-learn {\it LinearRegression} model on the pooled data. Another dataset of 200 points was then generated, and we computed the R2 on this unseen dataset. This experiment was repeated 1000 times using each time a different seed and the box plot in \Cref{subfig:mock_1} presents the min-max, the median the first and the third quartile. The different preprocessing steps shown are:

\begin{itemize}
\item None: no preprocessing step is applied
\item Whitening: for each of the three dimensions of $X_i$, we subtract the empirical mean and we divide by the empirical standard deviation computed across all ten centers to the train dataset and the test dataset
\item LocalYJ: we use one center randomly chosen to perform \expYJ\ with $t_\mathrm{max}=20$ to each of the dimensions of the dataset. The fitted triplets $\lambda_*, \mu_*, \sigma^2_*$ found for each column are then used to normalize the dataset of all 10 centers and the test dataset.
\item Federated YJ: We apply \secYJ\ with $t_\mathrm{max}=20$ on the 10 centers to each of the dimensions of the dataset. The fitted triplets $\lambda_*, \mu_*, \sigma^2_*$ found for each column are then used to normalize the dataset of all 10 centers and the test dataset.
\end{itemize}

\subsection{Testing Brent minimization on more dataset}
\label{app:Brent}
As explained in the paragraph {\it Numerical stability of \expYJ} of \Cref{sec:expYJ}, applying blindly the Brent minimization method of scikit-learn to minimize the Yeo-Johnson negative log-likelihood might result in numerical instabilities and might collapse all the values of the dataset into a single value. To check further whether this phenomenon is likely to appear, we apply the scikit-learn Yeo-Johnson transformation to various real-life tabular datasets of the UC Irvine Machine Learning repository \cite{Dua:2019} (which are under a Creative Commons Attribution 4.0 International, CC BY 4.0).  For each dataset, we only kept the features that have at least two distinct values. We found that for the 484 fetaures out of 24 datasets, this issue arises 5 times, as summarized by \Cref{table:brentextended}

\begin{table}[t!]
\centering
\sisetup{
    group-separator={,},
    group-minimum-digits=3,
    table-number-alignment = right,
    table-figures-integer = 6,
    detect-weight = true,
    detect-inline-weight = math
}
\resizebox{0.8\columnwidth}{!}{%
\begin{tabular}{cccc}
\toprule
Dataset name &\# of samples&\# of features (with at least two distinct values)& \# of instabilities of Brent minimization\\
\toprule
airfoil self noise & 1503 & 5 & 0 \\
\rowcolor{LightRed}
blood transfusion & 748 & 4 & 1 \\
boston & 506 & 13 & 0 \\
\rowcolor{LightRed}
breast cancer diagnostic & 569 & 30 & 2 \\
california & 20640 & 8 & 0 \\
climate model crashes & 540 & 18 & 0 \\
concrete compression & 1030 & 7 & 0 \\
concrete slump & 103 & 7 & 0 \\
connectionist bench sonar & 208 & 60 & 0 \\
connectionist bench vowel & 990 & 10 & 0 \\
\rowcolor{LightRed}
ecoli & 336 & 7 & 2 \\
glass & 214 & 9 & 0 \\
ionosphere & 351 & 34 & 0 \\
iris & 150 & 4 & 0 \\
libras & 360 & 90 & 0 \\
parkinsons & 195 & 23 & 0 \\
planning relax & 182 & 12 & 0 \\
qsar biodegradation & 1055 & 41 & 0 \\
seeds & 210 & 7 & 0 \\
wine & 178 & 13 & 0 \\
wine quality red & 1599 & 10 & 0 \\
wine quality white & 4898 & 11 & 0 \\
yacht hydrodynamics & 308 & 6 & 0 \\
yeast & 1484 & 8 & 0 \\
\bottomrule
\end{tabular}
}

\caption{Number of feature for which the {\it scikit-learn} implementation of Yeo-Johnson based on Brent minimization method fails for 24 different datasets available on the UC Irvine Machine Learning repository \cite{Dua:2019}. We only kept the features with at least two distinct values.}
\label{table:brentextended}
\end{table}

\section{Further details on \Cref{prop:leakage}}
\label{app:AppF}
\Cref{prop:leakage} states that all intermediate quantities of \secYJ\ can be recovered from its final result $\lambda_*$. We provide in \Cref{alg:RecoverInfo} a way to construct the function $\mathcal{F}$ introduced in \Cref{prop:leakage} that can perform this recovery.

\begin{algorithm}[ht]
   \caption{Function $\mathcal{F}$ recovering quantities revealed by \secYJ}
   \label{alg:RecoverInfo}
\begin{algorithmic}
   \REQUIRE  Hyperparameters $\lambda_{t=0}, \lambda^-_{t=0}, \lambda^+_{t=0}$ number of steps $t_\mathrm{max}$, $\lambda_*$
   \FOR{$t=1$ {\bfseries to} $t_\mathrm{max}$}
   \IF{$\lambda_{t-1} < \lambda_*$}
   \STATE $\Delta_t = 1$
   \ELSE
   \STATE $\Delta_t = -1$
   \ENDIF
   \STATE $\lambda_t, \lambda^-_t, \lambda^+_t \leftarrow \mathrm{\textsc{ExpUpdate}}(\lambda_{t-1}, \lambda^-_{t-1},\lambda^+_{t-1}, \Delta_{t})$
   \ENDFOR
   \ENSURE $(\lambda_t, \lambda^-_t, \lambda^+_t, \Delta_t)_{t=0,\dots, t_\mathrm{max}}$
\end{algorithmic}
\end{algorithm}

We apply \Cref{alg:RecoverInfo} on the 108 features used in \Cref{fig:hetero_homo_comparison}, with a fixed-point precision of $f=50$. We numerically check that the output of $\mathcal{F}$ from \Cref{alg:RecoverInfo} matches the intermediate quantities revealed by \Cref{alg:SecureFedYJ} up to machine precision.

\end{document}